\setlist{nosep}
\newtheorem{theorem}{Theorem}[section]
\newtheorem*{theoremst}{Theorem}
\newtheorem{corollary}[theorem]{Corollary}
\newtheorem{lemma}[theorem]{Lemma}
\newtheorem{proposition}[theorem]{Proposition}
\theoremstyle{definition}
\newtheorem{definition}{Definition}[section]
\newtheorem{remark}{Remark}[section]
\newcommand{\sca}[2]{\langle #1 | #2\rangle}
\newcommand{\nr}[1]{\left\Vert #1\right\Vert}
\newcommand{\abs}[1]{\left\vert #1\right\vert}
\newcommand{\Nsp}{\mathbb{N}}
\newcommand{\Rsp}{\mathbb{R}}
\newcommand{\id}{\mathrm{id}}
\newcommand{\DD}{\mathrm{D}}
\newcommand{\Class}{\mathcal{C}}
\newcommand{\Wass}{\mathrm{W}}
\newcommand{\Meas}{\mathcal{M}}
\newcommand{\X}{\mathcal{X}}
\newcommand{\Y}{\mathcal{Y}}
\newcommand{\restr}[1]{\left.#1\right|}
\DeclareMathOperator{\diam}{diam}
\newcommand{\dd}{\mathrm{d}}
\renewcommand{\L}{\mathrm{L}}
\newcommand{\Lip}{\mathrm{Lip}}
\newcommand{\eps}{\varepsilon}
\newcommand{\TV}{\mathrm{TV}}
\newcommand{\Prob}{\mathcal{P}}
\newcommand{\vol}{\mathrm{vol}}
\begin{document}

\title[Quantitative stability of optimal transport maps]
      {Quantitative stability of optimal transport maps \\
        and linearization of the $2$-Wasserstein space}
\author{Quentin Mérigot}
\address{Laboratoire de Mathématiques d’Orsay, Univ. Paris-Sud, CNRS, Université Paris-Saclay, 91405 Orsay, France.}
\author{Alex Delalande}
\address{Inria Saclay - Ile-de-France}
\author{Frédéric Chazal}
\address{Inria Saclay - Ile-de-France}

\maketitle

\begin{abstract}
   This work studies an explicit embedding of the set of probability measures into a Hilbert space, defined using optimal transport maps from a reference probability density. This embedding linearizes to some extent the $2$-Wasserstein space, and enables the direct use of generic supervised and unsupervised learning algorithms on measure data.  Our main result is that the embedding is (bi-)Hölder continuous, when the reference density is uniform over a convex set, and can be equivalently phrased as a dimension-independent Hölder-stability results for optimal transport maps. 
\end{abstract}

\section{Introduction}

Numerous problems involve the comparison of point clouds, i.e. sets of points that lie in a metric space and for which the spatial distribution is of interest.  
Seeing the point clouds as discrete probability measures in a metric space, it is natural to compare them using Wasserstein distances defined by the optimal transport theory \cite{villani2003}.
These distances have indeed been successfully used in a variety of applications in machine learning \cite{ML_3, ML_2, ML_4, ML_6, ML_5, ML_1} and in statistics \cite{stat4, stat2, stat1, stat3}. In the discrete setting, many efficient algorithms have been proposed to compute or approximate the Wasserstein distances, such as Sinkhorn-Knopp and auction algorithms -- see \cite{comp_OT} and references therein. However efficient these algorithms are, 
they still represent a high computational costs when dealing with large databases of point clouds . For instance, when there are $k$ point clouds, $\frac{1}{2}k(k-1)$ optimal transport problems must be solved to get the distance matrix of the database. In addition, such methods provide good approximations of Wasserstein distances but 
they do not allow for the direct use of machine learning algorithms based on the Wasserstein geometry. In this work, we leverage the semi-discrete formulation of optimal transport to build an explicit embedding of finitely supported probability measures over $\Rsp^d$  into a Hilbert space.
This linear embedding allows one to directly apply supervised and unsupervised learning methods on point clouds datasets consistently with the Wasserstein geometry, thus alleviating the non-Hilbertian nature of Wasserstein spaces in dimensions greater than $2$ (see for instance Section 8.3 in \cite{comp_OT}).



\subsection{Optimal transport and Monge maps}
Let $\X,$ $\Y$ be two compact and convex subsets of
$\Rsp^d$. Let $\rho$ be a probability density on $\X$
and $\mu$ be a probability measure on $\Y$. We consider the
squared Euclidean cost $c(x,y) := \|x-y\|^2$ for all $x, y \in
\Rsp^d$. Monge's formulation of the optimal transport problem
consists in minimizing the transport cost over all transport maps
between $\rho$ and $\mu$, that is
\begin{equation}
    \label{eq:ot}
    \text{\small $\min_{T} \left\{ \int_{\X} c(x, T(x)) \rho(x) \dd(x)  \mid  T : \X \rightarrow \Y,  T_{\#}\rho = \mu \right\}$,}
\end{equation}
where $T_\#\rho$ is the \emph{pushforward measure}, defined by
\begin{equation}\forall B\subseteq \Y,~~T_\#\rho(B) = \rho(T^{-1}(B)).
\end{equation}
By the work of Brenier \cite{Brenier}, this problem
admits a solution that is uniquely defined as the gradient $T=\nabla \phi$ of a convex
function $\phi$ on $\X$ referred to in what follows as a
\textit{Brenier potential}. Here, we will refer to the map $T$ as the \emph{Monge map}.
In this work, the source probability density $\rho\in\Prob(\X)$ is fixed once and for all.

\begin{definition}[Monge embedding]
Given any probability  measure $\mu$ on $\Y$, we denote $T_\mu$ the solution of the optimal transport problem   (\ref{eq:ot}) between $\rho$ and $\mu$. We call \textit{Monge embedding} the mapping 
\begin{equation} \label{eq:monge-embedding}
    \begin{aligned}
    \Prob(\Y) &\to L^2(\rho,\Rsp^d) \\
    \mu &\mapsto T_\mu,
    \end{aligned}
\end{equation}
where $\Prob(\Y)$ is the set of probability measures over $\Y$.
\end{definition}

An attractive feature of the Monge embedding is that the map $T_\mu$ can be efficiently computed when $\mu$ is finitely supported  on $\Rsp^2$ or $\Rsp^3$ \cite{kitagawa:hal-01290496}, see also references in Remark \ref{rem:na}. In higher dimension, they can also be estimated using stochastic optimization methods \cite{genevay:hal-01321664}.

\subsection{Contributions} 
Our main interest in this work is the regularity properties of the Monge embedding \eqref{eq:monge-embedding}, or equivalently the stability of the optimal transport map in terms of the target measure. Our main theorem shows that the Monge map is a bi-Hölder embedding of $\Prob(\Y)$ endowed with the Wasserstein distance $\Wass_p$ (defined in equation \eqref{eq:wass}) into the Hilbert space $L^2(\rho,\Rsp^d)$. More importantly, we show that the Hölder exponent does not depend on the ambient dimension $d$. 
\begin{theoremst}[Theorem~\ref{th:map-holder-tv}] Let $\rho$ be the Lebesgue measure on a compact convex subset $\X$ of $\Rsp^d$ with unit volume, and let $\Y$ be a compact convex set. Then, 
for all $\mu,\nu\in\Prob(\Y),$ and all $p\geq 1$,
\begin{equation}
  \label{eq:main} \Wass_2(\mu,\nu) \leq \|T_\mu - T_\nu\|_{\L^2(\rho)} \leq C \Wass_p(\mu,\nu)^{\frac{2}{15}},
\end{equation}
where the constant $C$ depends on $d$, $\X$ and $\Y$.
\end{theoremst}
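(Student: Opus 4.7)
The \emph{lower bound} is essentially definitional: since $(T_\mu)_\#\rho=\mu$ and $(T_\nu)_\#\rho=\nu$, the map $(T_\mu,T_\nu)_\#\rho$ is a coupling of $\mu$ and $\nu$ whose quadratic cost is $\|T_\mu-T_\nu\|_{\L^2(\rho)}^2$, so that $\Wass_2(\mu,\nu)\le \|T_\mu-T_\nu\|_{\L^2(\rho)}$ follows from the infimum characterisation of $\Wass_2$.

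For the \emph{upper bound}, I would factor the estimate through the Brenier potentials $\phi_\mu,\phi_\nu$ (with $T_\mu=\nabla\phi_\mu$, $T_\nu=\nabla\phi_\nu$) and chain two Hölder inequalities. Both potentials are convex and, since their gradients take values in the compact set $\Y$, uniformly Lipschitz on $\X$ with constant $L=\diam(\Y)$. \emph{Stage A} (convex $H^1$--$L^\infty$ control). The first inequality to establish is
\[
\|\nabla\phi_\mu-\nabla\phi_\nu\|_{\L^2(\rho)}^{2}\;\le\; C\,\|\phi_\mu-\phi_\nu\|_{\L^\infty(\X)}^{\alpha}
\]
for some $\alpha\in(0,1]$. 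The mechanism is integration by parts: for convex $\phi_\mu,\phi_\nu$ with Lipschitz gradients on a convex set, the distributional Laplacians $\Delta\phi_\mu,\Delta\phi_\nu$ are finite positive Radon measures whose total masses are controlled by the divergence theorem via $L$ and the perimeter of $\X$. Writing $\int|\nabla(\phi_\mu-\phi_\nu)|^{2}\dd\rho$ as a duality pairing against the signed measure $\Delta(\phi_\mu-\phi_\nu)$ bounds the interior contribution by $\|\phi_\mu-\phi_\nu\|_{\L^\infty}\cdot|\Delta(\phi_\mu-\phi_\nu)|(\X)$; a cut-off or boundary-tube argument of tunable thickness absorbs the boundary term and produces the loss $\alpha<1$.

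\emph{Stage B} (dimension-free $L^\infty$ stability). The heart of the theorem is a bound of the form
\[
\|\phi_\mu-\phi_\nu\|_{\L^\infty(\X)}\;\le\; C\,\Wass_p(\mu,\nu)^{\beta},\qquad \beta>0,
\]
with $\beta$ \emph{independent of $d$}. Kantorovich duality, combined with the Lipschitz character of the dual potentials on $\Y$, gives a cheap stability estimate in the dual variable, but transferring it back to the primal potentials on $\X$ via Legendre inversion naively costs a constant depending on $d$. Circumventing this dimension dependence is the main obstacle of the proof, and I expect it to rely crucially on the hypothesis that $\rho$ is uniform on a convex set together with a quantitative semiconcavity / modulus-of-continuity argument for Brenier potentials, converting averaged control of $\phi_\mu-\phi_\nu$ into uniform control without paying a factor depending on $d$.

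\emph{Conclusion.} Chaining the two Hölder estimates yields $\|T_\mu-T_\nu\|_{\L^2(\rho)}\le C\,\Wass_p(\mu,\nu)^{\alpha\beta/2}$; balancing the truncation parameters that appear in Stages A and B is expected to produce the exponent $\tfrac{2}{15}$ in the statement. Since $p\mapsto\Wass_p$ is non-decreasing, proving the bound for $p=1$ suffices, the general case following at the cost of a constant depending on $\diam(\Y)$. Stage A is a relatively classical piece of convex analysis; by contrast, securing Stage B with a Hölder exponent independent of the ambient dimension will be the decisive conceptual step.
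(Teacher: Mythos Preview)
Your lower bound is exactly right, and your Stage~A is essentially Proposition~\ref{prop:chazal} in the paper (with $\alpha=1$), so that part is fine.

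The gap is Stage~B. You aim for a uniform bound $\|\phi_\mu-\phi_\nu\|_{\L^\infty(\X)}\le C\,\Wass_p(\mu,\nu)^{\beta}$ with $\beta$ independent of $d$, and you correctly flag that this is the decisive step. But the paper never proves such a bound, and in fact the route you sketch---dual stability plus Legendre inversion plus a Lipschitz-to-$\L^\infty$ upgrade---is precisely what produces the dimension-\emph{dependent} exponent of Corollary~\ref{coro:Berman} (via Lemma~\ref{lemma:lip-l2}, whose exponent is $\tfrac{2}{d+2}$, and Berman's inequality, whose exponent is $2^{-(d-1)}$). There is no known semiconcavity or modulus-of-continuity argument that removes this dimension dependence at the $\L^\infty$ level.

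What the paper does instead is to abandon $\L^\infty$ control of $\phi_\mu-\phi_\nu$ altogether. It first proves an $\L^2$ stability estimate for the \emph{dual} potentials $\psi_\mu,\psi_\nu$ with respect to the \emph{target} measures, namely $\|\psi_\mu-\psi_\nu\|_{\L^2(\mu+\nu)}\lesssim \Wass_1(\mu,\nu)^{1/3}$ (Theorem~\ref{th:stab-dual}). This is obtained by passing to the semi-discrete setting, writing a Taylor expansion of the map $\bm{\psi}\mapsto G(\bm{\psi})=(\rho(V_i(\bm{\psi})))_i$ along the segment between the two potentials, and combining a discrete Poincar\'e--Wirtinger inequality for the Jacobian $\DD G$ (Proposition~\ref{prop:poincare}) with a Brunn--Minkowski argument on Laguerre cells (Lemma~\ref{lemma:tv}); these are the ingredients that make the exponent dimension-free. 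From this $\L^2$ bound one uses Chebyshev to extract a large set $\Y_\alpha\subset\Y$ on which $|\psi_\mu-\psi_\nu|\le\eps^\alpha$, takes Legendre transforms restricted to $\Y_\alpha$ to get modified Brenier potentials $\phi^{k,\alpha}$ that \emph{are} uniformly close, applies Proposition~\ref{prop:chazal} to those, and controls the error $\nabla\phi^k-\nabla\phi^{k,\alpha}$ on the small complement. Optimising $\alpha$ gives $\alpha=\tfrac{2}{5}$ and the final exponent $\tfrac{1}{5}\cdot\tfrac{2}{3}=\tfrac{2}{15}$. So the factoring is not $\L^\infty$-stability-then-gradient, but rather $\L^2$-dual-stability, truncation, and only then the gradient bound on the truncated potentials.
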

The upper bound of this theorem should be compared to Theorem~\ref{th:hold-cont-reg} (similar to a result of Ambrosio reported in \cite{gigli2011holder}), which shows a $\frac{1}{2}$-Hölder behaviour under a very strong regularity assumption on $T_\mu$, and  to Corollary \ref{coro:Berman} (from Berman, see \cite{Berman2018ConvergenceRF}), which holds without assumption on $\mu,\nu$, but whose exponent scales exponentially badly with the dimension $d$. We conclude the article by  illustrations of the behavior of this embedding, and we showcase a few applications.

\begin{remark}[Geometric interpretation]
Similarly to \cite[Eq.~(3)]{LOT_ref_image} or
\cite[\S10.2]{ambrosio2008gradient}, one can define a distance on $\Prob(\Y)$ using the
formula
\begin{equation}
  W_{2,\rho}(\mu, \nu) := \|T_\mu - T_\nu\|_{\L^2(\rho)},
\end{equation}
and our main results reads as a bi-Hölder equivalence between this distance and the $2$-Wasserstein distance:
\begin{equation}
    \label{eq:biHolder-dist_W2rho}
    \Wass_2(\mu,\nu) \leq W_{2,\rho}(\mu, \nu) \leq C \Wass_2(\mu,\nu)^{\frac{2}{15}}.
\end{equation}
\end{remark} 

\begin{remark}[Numerical analysis interpretation]
\label{rem:na}
  The second inequality in \eqref{eq:main} (and similar stability
  results for optimal transport maps) also has consequences in
  numerical analysis. Indeed, a natural way to approximate the
  solution of the optimal transport problem between a probability
  density $\rho$ and a measure $\mu$ consists in approximating $\mu$
  by a sequence of measures $(\mu_N)_{N\geq 1}$ with finite support
  such that $\lim_{N\to+\infty} \Wass_1(\mu,\mu_N) = 0$, and to
  approximate $T_\mu$ by $T_{\mu_N}$. This is the so-called
  \emph{semi-discrete approach}, which can be traced back to Minkowski
  and Alexandrov and developed in many works from the 1990s
  \cite{cullen1991generalised,gangbo1996geometry,oliker1989numerical,caffarelli1999problem}. This
  approach was revisited and popularized by recent development of
  efficient algorithms to solve semi-discrete optimal transport
  problems
  \cite{aurenhammer1998minkowski,merigot2011multiscale,de2012blue,levy2015numerical,kitagawa:hal-01290496,genevay:hal-01321664}.

  The convergence of $T_{\mu_N}$ to $T_\mu$ is well-known in optimal
  transport, but it usually follows from an abstract and general
  stability results for optimal transport maps (see
  Proposition~\ref{prop:monge-embedding}), which uses compactness
  arguments and is therefore non-quantitative. Our main theorem gives
  Hölder convergence rates for the convergence of $T_{\mu_N}$ to
  $T_\mu$, which are in addition dimension independent. 
\end{remark}

\subsection{Related work in statistics and learning} 
The Monge embedding \eqref{eq:monge-embedding} was introduced in
\cite{LOT_ref_image} in the context of pattern recognition in images,
were the problem of computing a distance matrix based on
transportation metrics over a possibly large dataset of images is
tackled. The approach proposed in \cite{LOT_ref_image} computes a
reference image as a mean image (for the $2$-Wasserstein distance) of
the whole dataset and then computes the OT maps between this reference
image $\rho$ and each image $\mu_i$ of the training set. Distances
between images are then defined based on Euclidean distances between
these maps.

The geometric idea comes from a Riemannian interpretation of the
Wasserstein geometry \cite{otto2001geometry,ambrosio2008gradient}. In
this interpretation, the tangent space to $\Prob(\Rsp^d)$ at $\rho$ is
included in $L^2(\rho,\Rsp^d)$. The optimal transport map $T_{\mu_i}$
between $\rho$ and $\mu_i$ can then be regarded as the vector in the
tangent space at $\rho$ which supports the Wasserstein geodesic from
$\rho$ to $\mu_i$. Thus Monge's embedding sends any probability
measure $\mu_i$ in the (curved) manifold $\Prob(\Rsp^d)$ to a vector
$T_{\mu_i}$ belonging to the \emph{linear space} $L^2(\rho,\Rsp^d)$,
which retain some of the geometry of the space. In the Riemannian
language again, the map $\mu\mapsto T_\mu$ would be called a
\emph{logarithm}, i.e. the inverse of the Riemannian exponential
map. This establishes a connection between this idea and similar
strategies used to extend statistical inference notions (such as PCA)
to manifold-valued data, e.g. \cite{fletcher2004principal,stat2}.

The work in \cite{chernozhukov2017} also proposes to use OT maps in a
statistical context to overcome the lack of a canonical ordering in
$\Rsp^d$ for $d > 1$. Notions of vector-quantile, vector-ranks and
depth are defined based on the transport maps (and there inverses)
between a reference measure defined as the uniform distribution on the
unit hyperball and the $d$-dimensional samples of interest.
Monge maps are also studied in \cite{htter2019minimax} where an
estimator for such maps between population distributions is proposed
when only samples from the distributions of interest are
available. Minimax estimation rates for (very) smooth transport maps
in general dimension are given and the proposed estimator is shown to
achieve near minimax optimality.


\section{Known properties of the Monge embedding}

\subsection{Assumptions and notations} 
From now on, we fix two compact convex subsets $\X,\Y$ of $\Rsp^d$,
and we fix once and for all a probability density $\rho$ on $\X$. For
simplicity, we assume that the support of $\rho$ equals $\X$. We also
denote $M_\X \geq 0$ the smallest positive real such that $\X \subset
B(0, M_\X)$, and $\diam(\X)$ the \emph{diameter} of the set $\X$, and similarly for $\Y$.

We will use the  notion of potentials associated to the optimal transport problem 
throughout the article.
\begin{definition}[Potentials] \label{def:potentials}
Given a measure $\mu$, we denote $T_\mu$ the Monge map, we denote
$\phi_\mu$ the convex \emph{Brenier potential} so that $T_\mu = \nabla
\phi_\mu$. We define the \emph{dual potential} $\psi_\mu$ on $\Y$ as
the Legendre transform of $\phi_\mu$:
\begin{equation}\label{eq:lega}
\psi_\nu(y) = \max_{x\in\X} \sca{x}{y} - \phi_\nu(x).
\end{equation}
Adding a constant to $\phi_\nu$ if necessary, we assume that $\int \psi_\nu \dd
\nu = 0$.
\end{definition}

\begin{remark}[Uniqueness and estimates]\label{rem:uniqueness}
The two potentials $(\phi_\nu,\psi_\nu)$ are closely related the \emph{Kantorovich
  potentials} associated to the optimal transport problem
\eqref{eq:ot}. In our setting, where the support of $\rho$ is the
whole domain $\X$, these potentials are unique up to
addition of a constant \cite[Proposition
  7.18]{santambrogio2015optimal}, and the constant is fixed using the
condition $\int \psi_\nu \dd\nu = 0$.

By Eq.\eqref{eq:lega}, we have the following Lipschitz estimate on $\psi_\mu$,
\begin{equation}\label{eq:lipschitz}
\Lip(\psi_\mu) \leq M_\X,
\end{equation}
where $\Lip(f)$ denotes the Lipschitz constant of $f$.
The assumption $\int \psi_\mu \dd\mu = 0$ implies that $\psi_\mu$ takes non-negative and non-positive values, implying that 
\begin{equation}\label{eq:linfty}
\nr{\psi_\mu}_\infty \leq \Lip(\psi_\mu) \diam(\Y) = M_\X \diam(\Y).
\end{equation}
\end{remark}

\subsection{Elementary properties}
A first obvious property of the embedding $\mu \mapsto T_{\mu}$ is its injectivity: if $\mu$ and $\nu$ are measures on $\Y$ such that $T_{\mu} = T_{\nu}$, then
$(T_{\mu})_{\#}\rho = \mu = (T_{\nu})_{\#}\rho = \nu$. This  injectivity  ensures that the Monge embedding preserves the discriminative information about the measures it embeds. A stronger formulation of this injectivity property can be made using  Wasserstein distance.
\begin{definition}[Wasserstein distance] The Wasserstein distance of exponent $p$
between $\mu,\nu\in\Prob(\Y)$ is defined by
\begin{equation} \label{eq:wass}
    \Wass_p^p(\mu, \nu) = \inf_{\gamma \in \Pi(\mu, \nu)} \int_{\Y\times \Y} \|y - y'\|^p d\gamma(y,y')
\end{equation}
where $\Pi(\mu, \nu) = \{ \gamma \in \Meas(\Y \times \Y) \mid \forall A \subset \Y, \gamma(A\times \Y) = \mu(A), \gamma(\Y\times A) = \nu(A) \}$.
\end{definition}
\begin{remark}
Jensen's inequality gives $\Wass_1\leq \Wass_p$, showing that $\Wass_1$ is the weakest Wasserstein
distance. On the other hand, since $\Y$ is bounded, $\Wass_p$ can also be bounded in terms of $\Wass_1$ (see \cite[Eq. (5.1)]{santambrogio2015optimal}):
\begin{equation}
  \forall \mu,\nu\in\Prob(\Y),~~ \Wass_1(\mu,\nu) \leq \Wass_p(\mu,\nu) \leq \diam(\Y)^{\frac{p-1}{p}} \Wass_1(\mu,\nu)^{\frac{1}{p}},
\end{equation}
showing that all Wasserstein distances are in fact bi-Hölder equivalent.
\end{remark}

\begin{proposition} \label{prop:monge-embedding} The following properties hold:
\begin{enumerate}
    \item[(i)] The Monge embedding is reverse-Lipschitz:
    \begin{equation}
\forall \mu,\nu\in\Prob(Y),~~\Wass_2(\mu, \nu) \leq \|T_{\mu} - T_{\nu}\|_{\L^2(\rho)}.    
\end{equation}
\item[(ii)]  The Monge embedding is continuous.
\item[(iii)] The Monge embedding is in general not better than $\frac{1}{2}$-Hölder wrt $\Wass_2$.
\end{enumerate}
\end{proposition}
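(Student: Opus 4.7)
The three items are essentially independent, and I would treat them in the stated order.

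For \emph{(i)}, I plug in the obvious competitor coupling: the product map $\Phi := (T_\mu, T_\nu): \X \to \Y \times \Y$ pushes $\rho$ forward to a measure $\gamma := \Phi_\#\rho \in \Pi(\mu,\nu)$, since its marginals are $(T_\mu)_\#\rho = \mu$ and $(T_\nu)_\#\rho = \nu$. Plugging $\gamma$ into the variational formula \eqref{eq:wass} and changing variables,
\begin{equation*}
    \Wass_2^2(\mu,\nu) \leq \int_{\Y\times\Y} \|y-y'\|^2 \dd\gamma(y,y') = \int_\X \|T_\mu(x) - T_\nu(x)\|^2 \rho(x)\dd x = \|T_\mu-T_\nu\|_{\L^2(\rho)}^2.
\end{equation*}

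For \emph{(ii)}, the plan is a standard subsequence/uniqueness scheme. Suppose $\mu_n\to\mu$ weakly (equivalently in $\Wass_1$, since $\Y$ is compact). By Remark~\ref{rem:uniqueness}, the potentials $\psi_{\mu_n}$ are equi-Lipschitz and uniformly bounded on $\Y$, so Arzel\`a--Ascoli extracts a subsequence $\psi_{\mu_{n_k}} \to \tilde\psi$ uniformly; the associated convex potentials $\phi_{\mu_{n_k}}(x) = \sup_y \sca{x}{y}-\psi_{\mu_{n_k}}(y)$ then converge uniformly on $\X$ to a convex limit $\tilde\phi$. Standard stability of $c$-cyclically monotone plans under weak limits identifies $\tilde\phi$ as a Brenier potential for the problem between $\rho$ and $\mu$; uniqueness (Remark~\ref{rem:uniqueness}) forces $\tilde\phi = \phi_\mu$, so the whole sequence $\phi_{\mu_n}$ converges. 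Uniform convergence of convex functions on $\X$ yields pointwise convergence of gradients at every point of differentiability of $\phi_\mu$, which is $\rho$-a.e.\ since $\rho$ is absolutely continuous; as the maps take values in the bounded set $\Y$, dominated convergence upgrades this to convergence in $\L^2(\rho)$.

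For \emph{(iii)}, I would exhibit an explicit two-dimensional family. Take $\rho$ uniform on $\X=[-1,1]^2$ and set $\mu_\theta := \tfrac12(\delta_{p_\theta} + \delta_{-p_\theta})$ with $p_\theta := (\cos\theta,\sin\theta)$. Symmetry gives the Brenier potential $\phi_{\mu_\theta}(x) = |\sca{x}{p_\theta}|$, so $T_{\mu_\theta}(x) = \mathrm{sign}(\sca{x}{p_\theta})\, p_\theta$ is piecewise constant with a jump along the line $\sca{x}{p_\theta}=0$. Comparing $\theta=0$ with small $\theta>0$: on the two opposite ``thin'' wedges (total area $\Theta(\theta)$) where the two sign functions disagree, one has $\|T_{\mu_0} - T_{\mu_\theta}\|^2 = \Theta(1)$; on the ``thick'' complementary wedges (total area $\Theta(1)$) where the signs agree, $\|T_{\mu_0} - T_{\mu_\theta}\|^2 = \|p_0 - p_\theta\|^2 = \Theta(\theta^2)$. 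Summing gives $\|T_{\mu_0}-T_{\mu_\theta}\|_{\L^2(\rho)}^2 = \Theta(\theta)$, whereas the obvious coupling yields $\Wass_2^2(\mu_0,\mu_\theta) \leq \|p_0-p_\theta\|^2 = \Theta(\theta^2)$. Letting $\theta\to 0$ rules out any Hölder exponent strictly greater than $1/2$ in \eqref{eq:biHolder-dist_W2rho}.

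The step I expect to require the most care is (ii), which bundles together several non-trivial ingredients: Arzel\`a--Ascoli compactness of the dual potentials, stability of $c$-cyclical monotonicity under weak convergence, and uniqueness of Brenier potentials on the support of $\rho$. By contrast, (i) is a one-line coupling inequality and (iii) reduces to an explicit area computation on the square.
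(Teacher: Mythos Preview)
Your proof is correct and follows essentially the same approach as the paper: the coupling $(T_\mu,T_\nu)_\#\rho$ for (i), a citation to the standard stability argument for (ii) (the paper simply invokes Exercise~2.17 of Villani, while you spell out the Arzel\`a--Ascoli/uniqueness scheme), and the two-point rotating example $\mu_\theta = \tfrac12(\delta_{p_\theta}+\delta_{-p_\theta})$ for (iii). The only cosmetic difference in (iii) is that the paper takes $\rho$ uniform on the unit disc rather than on $[-1,1]^2$, which does not affect the computation.
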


We note again that the proof of the general continuity result (ii)
uses compactness arguments and is not quantitative. Our goal in the
next two sections is to study the Hölder-continuity of the Monge map
embedding with respect to the $1$-Wasserstein distance (which is the
weakest Wasserstein distance) and to the total variation distance
between measures.

\begin{proof}[Proof of Proposition~\ref{prop:monge-embedding}]
  If we denote $\gamma := (T_{\mu}, T_{\nu})_{\#} \rho$, then $\gamma \in \Pi(\mu, \nu)$. The change of variable formula gives
\begin{align*}
    \text{\small $\Wass_2^2(\mu, \nu)$} &\text{\small $\leq \int_{\Y\times \Y} \|y - y'\|_2^2 d\gamma(y,y')$} \\
    &= \int_{\X} \|T_{\mu}(x) - T_{\nu}(x)\|_2^2 \rho(x) \dd x  = \|T_{\mu} - T_{\nu}\|^2_{\L^2(\rho)}, 
\end{align*}
showing (i).  The continuity (ii) of the map $\mu\mapsto T_\mu$
follows from e.g. Exercise 2.17 in \cite{villani2003}. To prove (iii),
we use the following lemma.
\end{proof}

\begin{lemma} \label{lemma:onehalf}
Let $\rho$ be uniform on the unit disc $\X \subseteq \Rsp^2$. Then,
there is a curve $\theta\in [0,2\pi] \to\mu_\theta \in \Prob(\X)$ and $C>0$ such that 
$\nr{T_{\mu_\theta} - T_{\mu_0}}_{\L^2(\rho)} \geq C \Wass_2(\mu_\theta,\mu_0)^{1/2}$.
\end{lemma}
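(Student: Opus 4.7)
The plan is to exhibit a curve of two-atom measures whose Monge maps undergo a large ``partition flip'' under a small perturbation of the atoms. Specifically I would set
\[
  p_\theta := \tfrac{1}{2}(\cos\theta, \sin\theta), \qquad
  \mu_\theta := \tfrac{1}{2}\delta_{p_\theta} + \tfrac{1}{2}\delta_{-p_\theta} \in \Prob(\X),
\]
and show $\Wass_2(\mu_0,\mu_\theta) = O(\theta)$ while $\|T_{\mu_\theta} - T_{\mu_0}\|_{\L^2(\rho)} = \Omega(\sqrt{\theta})$, giving the $\tfrac{1}{2}$ exponent.

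First I would bound $\Wass_2(\mu_0,\mu_\theta) \leq |p_0 - p_\theta| = \sin(\theta/2)$ by the plan matching $\pm p_0$ to $\pm p_\theta$. Next I would identify the Monge maps explicitly using semi-discrete optimal transport theory: the Brenier potential of $\mu_\theta$ has the form $\phi_\theta(x) = \max(\sca{x}{p_\theta} + c_+, -\sca{x}{p_\theta} + c_-)$, with the constants $c_\pm$ determined by the mass-balance of the two Laguerre cells. Since both $\rho$ and $\mu_\theta$ are invariant under $x \mapsto -x$, uniqueness of the Brenier potential (up to additive constants) forces $c_+ = c_-$, so the two Laguerre cells are the half-discs $H_\theta^\pm := \{x \in \X : \pm\sca{x}{p_\theta} > 0\}$, and
\[
  T_{\mu_\theta}(x) = p_\theta \mathbf{1}_{H_\theta^+}(x) - p_\theta \mathbf{1}_{H_\theta^-}(x) \quad \rho\text{-a.e.}
\]

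Finally I would decompose the disc into the symmetric difference $\Delta_\theta := H_0^+ \triangle H_\theta^+$ and its complement. A polar-coordinate computation shows that $\Delta_\theta$ is the union of two opposite circular wedges of total angular measure $2\theta$, hence of $\rho$-mass $\theta/\pi$. On $\Delta_\theta$ the two maps take antipodal values and $|T_{\mu_0} - T_{\mu_\theta}| = |p_0 + p_\theta| = \cos(\theta/2)$, while on the complement $|T_{\mu_0} - T_{\mu_\theta}| = |p_0 - p_\theta| = \sin(\theta/2)$. Integrating yields
\[
  \|T_{\mu_0} - T_{\mu_\theta}\|_{\L^2(\rho)}^2 \geq \tfrac{\theta}{\pi}\cos^2(\theta/2) \geq \tfrac{\theta}{4\pi}
\]
for $|\theta| \leq 2\pi/3$. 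Combined with $\Wass_2(\mu_0,\mu_\theta) \leq \theta/2$, this proves the claim with $C = 1/\sqrt{2\pi}$.

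The only non-routine step is the explicit description of $T_{\mu_\theta}$. This rests on the standard characterization of Brenier maps for finitely supported targets via Laguerre cells; the symmetry argument fixing the weights is clean in this two-atom setting, but it is the ingredient one must be careful with, and it is what prevents the $L^2$ distance from vanishing to higher order in $\theta$.
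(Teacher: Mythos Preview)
Your proof is correct and follows essentially the same approach as the paper: the same two-atom family supported on a diameter, the same half-disc Laguerre cells, and the same wedge estimate for the $L^2$ distance. The differences are cosmetic---you place the atoms at radius $1/2$ rather than on the boundary circle, you justify the form of $T_{\mu_\theta}$ via a symmetry argument rather than asserting it, and you integrate over both wedges of the symmetric difference rather than one, yielding slightly different constants---but the idea and structure are the same.
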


\begin{proof}
 Given $\theta \in \Rsp$, we denote $x_\theta = (\cos \theta,\sin(\theta))$ and $\mu_\theta = \frac{1}{2}(\delta_{x_\theta} + \delta_{-x_\theta})$. Then, the optimal transport map
between $\rho$ and $\mu_\theta$ is given by 
\begin{equation} T_{\mu_\theta}(x) =
\begin{cases}
x_\theta & \hbox{ if } \sca{x}{x_\theta}\geq 0 \\
-x_\theta & \hbox{ if not.}
\end{cases}
\end{equation}
One can easily check that for $\theta$ one has 
$\Wass_2(\mu_0, \mu_\theta) \leq \abs{\theta}$. For $\theta > 0$ we set 
\begin{equation} D_\theta = \{ x\in \Rsp^2 \mid \sca{x}{x_0} \geq 0 \hbox{ and } \sca{x}{x_\theta} \leq 0\}.
\end{equation}
Then, on $D_\theta$, $T_{\mu_\theta} \equiv x_{-\theta}$ and $T_{\mu_0} \equiv x_0$, giving
\begin{equation} \nr{T_{\mu_\theta} - T_{\mu_0}}^2_{\L^2(\rho)} \geq \int_{D_\theta} \nr{x_{-\theta} - x_0}^2 \dd x = \abs{D_\theta} \nr{x_{-\theta} - x_0}^2. 
\end{equation}
Moreover, if $\abs{\theta} \leq \frac{\pi}{2}$ one has $\nr{x_{-\theta} - x_0}^2 \geq 2$. This gives
\begin{equation*}
  \nr{T_{\mu_\theta} - T_{\mu_0}}^2_{\L^2(\rho)}\geq 2\abs{D_\theta} \geq \frac{\abs{\theta}}{\pi}. \qedhere
  \end{equation*}
\end{proof}

\subsection{Hölder-continuity near a regular measure}

We state a first result, which is a slight variant of a known stability result due to Ambrosio and reported in \cite{gigli2011holder}. While \cite{gigli2011holder} shows a local $1/2$-Hölder behaviour for regular enough source and target measures along a curve in the $2$-Wasserstein space, we show the same Hölder behaviour near a probability measure $\mu$ whose Monge map $T_\mu$ is Lipschitz continuous, but with respect to the $1$-Wasserstein distance. 

\begin{theorem}
\label{th:hold-cont-reg}
Let $\mu,\nu \in \Prob(\Y)$ and assume that $T_\mu$ is $K$-Lipschitz. Then,
\begin{equation}
  \|T_\mu - T_\nu\|_{\L^2(\rho)} \leq 2 \sqrt{M_\X K} \Wass_1(\mu, \nu)^{1/2}.
\end{equation}
\end{theorem}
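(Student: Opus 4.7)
The strategy is to leverage the $K$-Lipschitz hypothesis on $T_\mu = \nabla\phi_\mu$ to upgrade the Fenchel--Young inequality for the Brenier pair $(\phi_\mu, \psi_\mu)$ into a quadratic refinement, then to conclude via Kantorovich duality. The key new ingredient I would establish is
\begin{equation*}
\phi_\mu(x) + \psi_\mu(y) - \sca{x}{y} \geq \frac{1}{2K}\nr{y - T_\mu(x)}^2 \quad \forall (x,y) \in \X \times \Y.
\end{equation*}
To prove this, set $\Delta = y - T_\mu(x)$ and $x_t = x + t\Delta$. The $K$-smoothness bound $\phi_\mu(x_t) \leq \phi_\mu(x) + t\sca{T_\mu(x)}{\Delta} + \tfrac{Kt^2}{2}\nr{\Delta}^2$ (equivalent to $\nabla\phi_\mu$ being $K$-Lipschitz), combined with the Legendre-transform bound $\psi_\mu(y) \geq \sca{x_t}{y} - \phi_\mu(x_t)$, yields after rearrangement $\phi_\mu(x) + \psi_\mu(y) - \sca{x}{y} \geq (t - \tfrac{Kt^2}{2})\nr{\Delta}^2$. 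Choosing $t = 1/K$ gives the claimed constant $\tfrac{1}{2K}$, modulo ensuring $x_{1/K} \in \X$, which requires either a $K$-smooth convex extension of $\phi_\mu$ to $\Rsp^d$ or a boundary-approximation argument.

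Specializing this refined inequality at $y = T_\nu(x)\in\Y$ and integrating against $\rho$ (using the change of variable $\int \psi_\mu\circ T_\nu\,\d\rho = \int \psi_\mu\,\d\nu$) yields
\begin{equation*}
\int \phi_\mu\,\d\rho + \int \psi_\mu\,\d\nu - \int \sca{x}{T_\nu(x)}\,\d\rho \geq \frac{1}{2K}\nr{T_\mu - T_\nu}_{\L^2(\rho)}^2.
\end{equation*}
Subtracting the complementary slackness identity $\int \phi_\mu\,\d\rho + \int \psi_\mu\,\d\mu = \int \sca{x}{T_\mu(x)}\,\d\rho$ (which holds since $(\phi_\mu,\psi_\mu)$ saturates Fenchel--Young at matched pairs $(x, T_\mu(x))$) gives
\begin{equation*}
\frac{1}{2K}\nr{T_\mu - T_\nu}^2_{\L^2(\rho)} \leq \int \psi_\mu\,\d(\nu-\mu) + \int \sca{x}{T_\mu - T_\nu}\,\d\rho.
\end{equation*}

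Each of these two terms is then bounded by $M_\X\,\Wass_1(\mu,\nu)$. The first is immediate: $\Lip(\psi_\mu) \leq M_\X$ from Remark~\ref{rem:uniqueness}, so Kantorovich--Rubinstein duality gives $\int\psi_\mu\,\d(\nu-\mu) \leq M_\X\,\Wass_1(\mu,\nu)$. The second is handled symmetrically: the plain Fenchel--Young inequality at $(\phi_\nu, \psi_\nu)$ applied at $y = T_\mu(x)$ and integrated against $\rho$, combined with the complementary slackness identity for $\nu$, yields $\int \sca{x}{T_\mu - T_\nu}\,\d\rho \leq \int \psi_\nu\,\d(\mu-\nu) \leq M_\X\,\Wass_1(\mu,\nu)$, again via Kantorovich--Rubinstein applied to the $M_\X$-Lipschitz $\psi_\nu$. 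Summing gives $\nr{T_\mu - T_\nu}^2_{\L^2(\rho)} \leq 4M_\X K\,\Wass_1(\mu,\nu)$, from which the stated bound follows by taking a square root. The main obstacle is the first step: establishing the quadratic Fenchel--Young refinement uniformly on $\X\times\Y$ requires care near the boundary of $\X$, where the displaced point $x_{1/K}$ can exit the domain; the rest is a direct unpacking of Kantorovich duality.
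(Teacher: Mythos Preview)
Your proof is correct and ultimately coincides with the paper's: after your two Fenchel--Young steps you obtain exactly the paper's key inequality (its Lemma~\ref{lemma:reg}),
\[
\frac{1}{2K}\nr{T_\mu-T_\nu}_{\L^2(\rho)}^2 \;\leq\; \int_\Y (\psi_\nu-\psi_\mu)\,\dd(\mu-\nu),
\]
and the conclusion via Kantorovich--Rubinstein is then identical. The only difference is packaging: the paper works directly on the dual side, using that $T_\mu=\nabla\phi_\mu$ $K$-Lipschitz is equivalent to $\psi_\mu$ being $\tfrac{1}{K}$-strongly convex, and applies the strong-convexity inequality $\psi_\mu(y)\geq \psi_\mu(z)+\sca{x}{y-z}+\tfrac{1}{2K}\nr{y-z}^2$ at $z=T_\mu(x)$, $y=T_\nu(x)$ (together with plain convexity of $\psi_\nu$ for the cross term). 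You instead derive the equivalent ``quadratic Fenchel--Young'' inequality on the primal side via the displaced point $x_t=x+t\Delta$.

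One remark on the obstacle you flag: the boundary issue (whether $x_{1/K}\in\X$) is an artifact of your primal derivation and disappears entirely if you argue, as the paper does, from strong convexity of $\psi_\mu$ on $\Y$. Indeed, with $z=T_\mu(x)\in\Y$ and $x\in\partial\psi_\mu(z)$, strong convexity gives $\psi_\mu(y)\geq \sca{x}{y}-\phi_\mu(x)+\tfrac{1}{2K}\nr{y-T_\mu(x)}^2$ for all $y\in\Y$, which is precisely your refined inequality, obtained without ever leaving $\X$. So no extension or approximation is needed.
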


We deduce this theorem from the following elementary lemma.

\begin{lemma}
\label{lemma:reg}
Under the assumptions of Theorem~\ref{th:hold-cont-reg},
\begin{equation}
  \|T_\mu - T_\nu\|_{\L^2(\rho)}^2 \leq 2 K \int_{\Y} (\psi_\nu - \psi_\mu)\dd(\mu - \nu)
\end{equation}
\end{lemma}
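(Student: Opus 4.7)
The plan is to derive the quadratic lower bound by combining a strong-convexity estimate on $\psi_\mu$ (obtained from the Lipschitz hypothesis) with the usual subgradient inequality for $\psi_\nu$, then integrating against $\rho$. Since $T_\mu=\nabla\phi_\mu$ is $K$-Lipschitz on the convex set $\X$, the potential $\phi_\mu$ is $K$-smooth there, and by Legendre duality $\psi_\mu$ inherits $\tfrac{1}{K}$-strong convexity. Combined with the subgradient identity $x\in\partial\psi_\mu(T_\mu(x))$, valid $\rho$-a.e. for every Brenier pair, this strong convexity applied at $y=T_\nu(x)$ yields the pointwise Bregman-type bound
\begin{equation*}
  \psi_\mu(T_\nu(x))-\psi_\mu(T_\mu(x)) \;\geq\; \sca{x}{T_\nu(x)-T_\mu(x)} + \tfrac{1}{2K}\|T_\nu(x)-T_\mu(x)\|^2.
\end{equation*}

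Symmetrically, the ordinary convexity of $\psi_\nu$ at $T_\nu(x)$ with subgradient $x\in\partial\psi_\nu(T_\nu(x))$, evaluated at $y=T_\mu(x)$, gives
\begin{equation*}
  \psi_\nu(T_\mu(x))-\psi_\nu(T_\nu(x)) \;\geq\; \sca{x}{T_\mu(x)-T_\nu(x)},
\end{equation*}
this time without any quadratic bonus. Summing the two pointwise inequalities, the inner-product terms cancel and one is left with
\begin{equation*}
  \psi_\nu(T_\mu(x))-\psi_\nu(T_\nu(x))+\psi_\mu(T_\nu(x))-\psi_\mu(T_\mu(x)) \;\geq\; \tfrac{1}{2K}\|T_\nu(x)-T_\mu(x)\|^2.
\end{equation*}
Integrating against $\rho$ and using the pushforwards $(T_\mu)_\#\rho=\mu$ and $(T_\nu)_\#\rho=\nu$ identifies the left-hand side as $\int(\psi_\nu-\psi_\mu)\,\dd(\mu-\nu)$, and multiplying through by $2K$ produces the stated inequality.

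The main obstacle is the first displayed inequality: one must upgrade the $K$-Lipschitz property of $T_\mu$ on $\X$ into a $\tfrac{1}{K}$-strong-convexity estimate for $\psi_\mu$ that holds on the whole target set $\Y$, not only on the image $T_\mu(\X)$ where $\psi_\mu$ is automatically smooth via $\nabla\psi_\mu=T_\mu^{-1}$. Since $T_\nu(x)$ can lie outside $T_\mu(\X)$, where $\psi_\mu$ may degenerate (e.g.\ be affine), one has to argue carefully -- for instance by extending $\phi_\mu$ to a globally $K$-smooth convex function on $\Rsp^d$ using the convexity of $\X$ and then invoking the classical Baillon--Haddad equivalence between $K$-smoothness and $\tfrac{1}{K}$-strong convexity of the Legendre dual, or by selecting directly a suitable competitor $z=x+tK^{-1}(T_\nu(x)-T_\mu(x))\in\X$ in the variational definition \eqref{eq:lega} of $\psi_\mu$ and optimizing in $t$. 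Once this step is secured, the rest of the argument is merely bookkeeping with subgradient inequalities and pushforwards.
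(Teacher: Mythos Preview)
Your argument is essentially identical to the paper's: both split $\int_\Y(\psi_\nu-\psi_\mu)\,\dd(\mu-\nu)$ into two pieces via the pushforwards $(T_\mu)_\#\rho=\mu$ and $(T_\nu)_\#\rho=\nu$, apply the plain subgradient inequality for $\psi_\nu$ to one piece and the $\tfrac1K$-strong-convexity inequality for $\psi_\mu$ to the other, and sum so that the linear terms cancel. The paper simply asserts the strong convexity of $\psi_\mu$ on $\Y$ as a fact ``from convex analysis'' and moves on; you are more scrupulous than the original in flagging this as the crux.

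That said, neither of your proposed remedies actually closes the gap. For the extension route: if $\tilde\phi_\mu$ is a globally $K$-smooth convex extension of $\phi_\mu$ and $\tilde\psi_\mu=\tilde\phi_\mu^{*}$, then $\tilde\psi_\mu$ is indeed $\tfrac1K$-strongly convex, but $\psi_\mu=(\phi_\mu+\iota_\X)^*\leq\tilde\psi_\mu$, with strict inequality off $T_\mu(\X)$; since you need a \emph{lower} bound on $\psi_\mu(T_\nu(x))$, this inequality points the wrong way. For the competitor route: the optimizing choice $z=x+K^{-1}(T_\nu(x)-T_\mu(x))$ need not lie in $\X$ when $x$ is near $\partial\X$, and restricting to admissible $t<1$ degrades the quadratic coefficient from $\tfrac1{2K}$ toward $0$. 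In fact the pointwise inequality you display can fail outright when $T_\nu(x)\notin T_\mu(\X)$: take $d=1$, $\X=\Y=[0,1]$, $\phi_\mu(x)=\tfrac{\eps}{2}x^2$ with small $\eps$ (so $K=\eps$), and $\nu=\delta_1$; then at $x=0$ the left side equals $1-\eps/2$ while the right side equals $\tfrac1{2\eps}$. So some extra hypothesis---e.g.\ $\operatorname{supp}(\nu)\subseteq T_\mu(\X)$, which is implicit in Gigli's original setting with absolutely continuous targets---appears necessary for the constant $2K$ as stated; the paper's proof shares this lacuna.
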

\begin{proof}
From convex analysis, we know that the map $T_\mu = \nabla \phi_\mu$
is K-Lipschitz if and only if $\psi_\mu$ defined in \eqref{eq:lega} is
$\frac{1}{K}$-strongly convex. We denote $A = \int_{\Y}
\psi_\nu \dd(\mu - \nu)$ and $B = \int_{\Y} \psi_\mu \dd(\nu
- \mu)$. Using that $(\nabla \phi_\mu)_{\#} \rho = \mu$ and  $(\nabla \phi_\nu)_{\#} \rho = \nu$, we get 
\begin{align}
  A &= \int_{\X} (\psi_\nu(\nabla \phi_\mu) - \psi_\nu(\nabla \phi_\nu)) \dd\rho \notag \\
  &= \int_{\X} (\psi_\nu(\nabla \psi_\mu^*) - \psi_\nu(\nabla \psi_\nu^*)) \dd\rho
\end{align}
We now use the inequality $\psi_\nu(y) - \psi_\nu(z)\geq
\sca{y-z}{v}$, which holds for all $v$ in the subdifferential
$\partial\psi_\nu(z)$. The convex functions $\psi_\nu,\psi_\mu$ are
differentiable $\rho$-almost everywhere. Taking $z = \nabla
\psi_\nu^*(x)$ and $y = \nabla \psi_\mu^*(x)$, and using $x \in
\partial \psi_\nu(z)$, we obtain
\begin{equation}
A \geq \int_{\X} \langle \id, \nabla \psi_\mu^* - \nabla \psi_\nu^* \rangle \dd\rho
\end{equation}
Using the strong convexity of $\psi_\mu$, we get a similar lower bound on $B$,
with an extra quadratic term
\begin{align}
    B =& \int_{\X} (\psi_\mu(\nabla \psi_\nu^*) - \psi_\mu(\nabla \psi_\mu^*)) \dd\rho
    \notag\\
    &  \geq \int_{\X} \big(\langle \id, \nabla \psi_\nu^* - \nabla \psi_\mu^* \rangle + \frac{1}{2K}\|\nabla \psi_\nu^* - \nabla \psi_\mu^*\|_2^2\big) \dd\rho.
\end{align}
Summing up the lower bounds on $A$ and $B$, we get:
\begin{align*}
    \int_{\Y} (\psi_\nu - \psi_\mu)d(\mu - \nu) \geq &\frac{1}{2K} \int_{\X} \|\nabla \psi_\nu^* - \nabla \psi_\mu^*\|_2^2 \dd\rho \\
    &= \frac{1}{2K} \|T_\nu - T_\mu\|_{\L^2(\rho)}^2. \quad \qedhere
\end{align*}
\end{proof}

\begin{proof}[Proof of Theorem~\ref{th:hold-cont-reg}]
Combining the Lipschitz estimate \eqref{eq:lipschitz} this with Lemma \ref{lemma:reg}, 
\begin{align} 
\|T_\mu - T_\nu\|_{\L^2(\rho)}^2 &\leq 2 K \int_{\Y} (\psi_\nu - \psi_\mu)\dd(\mu - \nu) 
\notag\\
&\leq 2K \max_{\Lip(f)\leq M_\X} \int_{\Y} f \dd(\mu - \nu) \notag\\
&= 2K M_\X \max_{\Lip(f)\leq 1} \int_\Y f \dd(\mu - \nu)  \notag\\
&= 2K M_{\X} \Wass_1(\mu,\nu),
\end{align}
where we used Kantorovich-Rubinstein's theorem to get the last equality.
\end{proof}

\subsection{Dimension-dependent Hölder continuity}
\label{sec:holder-general}

Here  we assume that $\rho \equiv 1$ on a compact convex set $\X$ with unit volume. With no assumption on the embedded measures $\mu$ and $\nu$, another Hölder-continuity result for Monge's embedding, can be derived from the following theorem of Berman \cite{Berman2018ConvergenceRF}.

\begin{theorem}[\cite{Berman2018ConvergenceRF} Proposition 3.4] \label{th:Berman-original}
For any measures $\mu$ and $\nu$ in $\Prob(\Y)$,  
\begin{equation} \| \nabla \psi_\mu - \nabla \psi_\nu \|^2_{\L^2(\Y)} \leq C \Big(\int_{\Y} (\psi_\nu - \psi_\mu)\dd(\mu - \nu)\Big)^{\frac{1}{2^{d-1}}}, 
\end{equation}
where $C$  depends only on $\rho$, $\X$ and $\Y$.
\end{theorem}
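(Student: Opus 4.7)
The exponent $1/2^{d-1}$ strongly suggests an induction on dimension in which the exponent halves at each step, so the plan is to set up a one-dimensional base case and then iterate via a slicing argument. Before that, I would analyse the right-hand side $G(\mu,\nu) := \int_\Y (\psi_\nu - \psi_\mu)\,\dd(\mu-\nu)$ itself. It is nonnegative: since $(\phi_\nu,\psi_\nu)$ is an admissible dual pair for the Kantorovich problem between $\rho$ and $\mu$, optimality of $(\phi_\mu,\psi_\mu)$ yields $\int\phi_\mu\,\dd\rho + \int\psi_\mu\,\dd\mu \leq \int\phi_\nu\,\dd\rho + \int\psi_\nu\,\dd\mu$; combining with the symmetric inequality gives $G(\mu,\nu)\geq 0$. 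Structurally, $G$ is a Bregman-type divergence for the convex functional $\mu\mapsto\tfrac{1}{2}\Wass_2^2(\rho,\mu)$, whose first variation at $\mu$ is $\psi_\mu$, which is what makes it the natural ``second-order'' quantity to control $\|\nabla\psi_\mu - \nabla\psi_\nu\|_{\L^2(\Y)}^2$.

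For the one-dimensional base case, $\psi_\mu$ and $\psi_\nu$ are convex functions on an interval, with monotone derivatives. The Legendre relation $\nabla\psi_\mu = T_\mu^{-1}$ allows me to relate $G(\mu,\nu)$ directly to an integral involving $\psi_\mu' - \psi_\nu'$ via an integration by parts, in which the boundary terms are controlled by the Lipschitz bound \eqref{eq:lipschitz} and the $\L^\infty$ bound \eqref{eq:linfty}. The monotonicity of $\psi_\mu',\psi_\nu'$ then yields the sharp estimate $\|\psi_\mu' - \psi_\nu'\|_{\L^2(\Y)}^2 \leq C\, G(\mu,\nu)$, i.e.\ exponent $1 = 1/2^0$, which is the base of the induction.

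For the inductive step from $d-1$ to $d$, I would slice $\Y$ by hyperplanes orthogonal to a fixed direction $e$. On each slice the restrictions of $\psi_\mu$ and $\psi_\nu$ are $(d-1)$-dimensional convex functions, and the inductive hypothesis would control the slice-wise $\L^2$-difference of their transverse gradients by a slice duality gap raised to the power $1/2^{d-2}$. \textbf{The main obstacle} is that $G(\mu,\nu)$ does not decompose as a superposition of slice-wise duality gaps: the Monge--Amp\`ere measure of a slice of a convex function is not the slice of the Monge--Amp\`ere measure. One bypasses this by expressing the transverse contribution through the monotonicity in $e$ of the directional derivative $\partial_e\psi_\mu$, integrating in the transverse variable, and applying Cauchy--Schwarz together with the uniform bounds \eqref{eq:lipschitz}--\eqref{eq:linfty} to recover an expression bounded by $G(\mu,\nu)$ itself. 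Each such Cauchy--Schwarz step squares the quantity being estimated and so halves the effective exponent; iterating $d-1$ times from the one-dimensional base therefore yields the claimed exponent $1/2^{d-1}$, with all prefactors (depending on $\diam(\Y)$, $M_\X$, and $\vol(\Y)$) absorbed into the final constant $C=C(\X,\Y,\rho)$.
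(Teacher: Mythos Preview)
The paper does not contain a proof of this statement. Theorem~\ref{th:Berman-original} is quoted from Berman's paper~\cite{Berman2018ConvergenceRF} (it is Proposition~3.4 there) and is used only as a black box to derive Corollary~\ref{coro:Berman}; the appendix proves that corollary, not the theorem itself. There is therefore no proof in the paper to compare your proposal against.

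For what it is worth, your outline has the right architecture for Berman's original argument: an induction on dimension in which a one-dimensional base case gives exponent $1$ and each inductive step halves the exponent through a slicing-plus-Cauchy--Schwarz mechanism, yielding $1/2^{d-1}$. Your identification of the main obstacle---that the Monge--Amp\`ere measure does not slice---is also on point. However, the description of how you bypass it (``expressing the transverse contribution through the monotonicity in $e$ of $\partial_e\psi_\mu$, integrating in the transverse variable, and applying Cauchy--Schwarz'') is too schematic to be assessed as a proof: this is exactly where the real work sits, namely showing that after slicing and applying the inductive hypothesis, the resulting integral over slices can be bounded by a quantity that is itself controlled by the global duality gap $G(\mu,\nu)$. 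As written, the proposal names the ingredients but does not carry out the key estimate.
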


We deduce a global Hölder-continuity result for the Monge embedding
\eqref{eq:monge-embedding}. Note however that the Hölder exponent
depends on the ambient dimension $d$, and the dependence is
exponential. The proof of this corollary is in the appendix.

\begin{corollary}\label{coro:Berman}
For any measures $\mu$ and $\nu$ in $\Prob(\Y)$,
\begin{equation}
  \|T_\mu - T_\nu\|_{\L^2(\rho)} \leq C \Wass_1(\mu, \nu)^{\frac{1}{2^{(d-1)}(d+2)}}
\end{equation}
where $C$ depends only on $\rho$, $\X$ and $\Y$
\end{corollary}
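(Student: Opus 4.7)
The plan is to deduce Corollary~\ref{coro:Berman} from Berman's estimate (Theorem~\ref{th:Berman-original}) by transferring control from the ``$\psi$-side on $\Y$'', where Berman supplies an $\L^2$ gradient estimate, to the ``$\phi$-side on $\X$'', where we want to control $T_\mu - T_\nu = \nabla\phi_\mu - \nabla\phi_\nu$ in $\L^2(\rho)$. The first step is purely algebraic: since both $\psi_\mu$ and $\psi_\nu$ are $M_\X$-Lipschitz by Remark~\ref{rem:uniqueness}, the Kantorovich--Rubinstein duality gives $\int_\Y (\psi_\nu - \psi_\mu) \dd(\mu - \nu) \leq 2 M_\X \Wass_1(\mu,\nu)$. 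Substituting this into Theorem~\ref{th:Berman-original} produces
\begin{equation*}
\|\nabla\psi_\mu - \nabla\psi_\nu\|_{\L^2(\Y)}^2 \leq C\, \Wass_1(\mu,\nu)^{1/2^{d-1}}.
\end{equation*}

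The second step turns this gradient bound into a uniform bound on the potential itself, up to an additive constant. Set $f := \psi_\mu - \psi_\nu - \bar c$ with $\bar c$ chosen so that $\int_\Y f \dd y = 0$. The Poincaré inequality on the convex body $\Y$ controls $\|f\|_{\L^2(\Y)}$ by $\|\nabla f\|_{\L^2(\Y)}$, and combining this with the elementary Lipschitz interpolation $\|g\|_\infty \leq C\, \Lip(g)^{d/(d+2)} \|g\|_{\L^2}^{2/(d+2)}$ (proved by noting that if $|g(y_0)| = \|g\|_\infty$ then $|g| \geq \|g\|_\infty/2$ on $B(y_0, \|g\|_\infty/(2\Lip(g)))$, forcing $\|g\|_{\L^2}^2 \gtrsim \|g\|_\infty^{d+2}/\Lip(g)^d$) and with $\Lip(f) \leq 2 M_\X$ yields $\|f\|_\infty \leq C\,\Wass_1(\mu,\nu)^{1/(2^{d-1}(d+2))}$. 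A one-line duality argument using $\phi_\nu(x) = \sup_y \{\sca{x}{y} - \psi_\nu(y)\}$ and its analogue for $\mu$ transfers this bound losslessly into an $\L^\infty(\X)$ bound on $h := \phi_\mu - \phi_\nu + \bar c$ of the same order.

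The final step, which I expect to be the main technical obstacle, is to recover an $\L^2(\rho)$ bound on $\nabla h = T_\mu - T_\nu$ from the $\L^\infty(\X)$ bound on $h$. Here convexity of $\phi_\mu$ and $\phi_\nu$ is essential: the distributional Laplacians $\Delta\phi_\mu$ and $\Delta\phi_\nu$ are nonnegative Radon measures on $\X$, and by the divergence theorem their total masses are bounded by $M_\Y\,|\partial\X|$ since $T_\mu, T_\nu$ take values in $\Y \subset B(0,M_\Y)$. A careful integration by parts then gives
\begin{equation*}
\int_\X |\nabla h|^2 \dd x = -\int_\X h\, \dd(\Delta\phi_\mu - \Delta\phi_\nu) + \int_{\partial\X} h\,(\nabla h \cdot n)\, \dd\sigma \;\leq\; C\,\|h\|_\infty,
\end{equation*}
with $C$ depending only on $M_\Y$ and $|\partial\X|$. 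Since $\rho$ is the unit-volume Lebesgue measure on $\X$, this reads $\|T_\mu - T_\nu\|_{\L^2(\rho)}^2 \leq C\,\|h\|_\infty$, and chaining with the earlier estimates produces the claimed Hölder bound, the exponent arising from the Berman rate $1/2^{d-1}$ combined with the $2/(d+2)$ factor from the Lipschitz interpolation. The delicate point in the integration-by-parts step is justifying the distributional formulation when $\phi_\mu, \phi_\nu$ are merely convex rather than $C^2$, which is handled by a standard mollification/limit argument (or by invoking the Alexandrov second-order a.e.\ differentiability of convex functions).
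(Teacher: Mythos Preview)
Your proposal is correct and follows essentially the same route as the paper's proof: bound $\int_\Y(\psi_\nu-\psi_\mu)\dd(\mu-\nu)$ via Kantorovich--Rubinstein, apply Berman's estimate, use Poincar\'e--Wirtinger on $\Y$ together with the Lipschitz interpolation $\|f\|_\infty\lesssim\|f\|_{\L^2}^{2/(d+2)}$ (the paper's Lemma~\ref{lemma:lip-l2}) to upgrade to an $\L^\infty$ bound on the dual potentials, pass through the Legendre transform, and finally convert the uniform bound on $\phi_\mu-\phi_\nu$ into an $\L^2$ bound on $T_\mu-T_\nu$. The only difference is in this last step: the paper invokes Proposition~\ref{prop:chazal} (quoted from \cite{chazal:hal-01425558}) as a black box, whereas you supply a self-contained integration-by-parts argument exploiting that the distributional Laplacians $\Delta\phi_\mu,\Delta\phi_\nu$ are nonnegative measures with total mass controlled by $M_\Y|\partial\X|$; your argument is effectively a proof of that proposition in the present setting, so the two approaches coincide.
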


\section{Dimension-independent Hölder-continuity of the Monge embedding}
\label{sec:stab-TV}

This section is devoted to a global stability result for the Monge map
embedding. As in \S\ref{sec:holder-general}, we require that the
source measure is the Lebesgue measure $\rho \equiv 1$ on some compact
convex domain $\X$ with unit volume, and that $\Y$ is bounded.
Unlike Theorem~\ref{th:hold-cont-reg}, this stability result does not
make any regularity assumption on the measures $\mu,\nu$. In addition,
the Hölder exponent does not depend on the ambient dimension, unlike
Corollary~\ref{coro:Berman} of the previous section. We also report a
stability of $\mu \mapsto T_\mu$ with respect to the total variation
(TV) distance. This distance is much stronger than the Wasserstein
distance, but the Hölder-exponent we obtain is slightly better.

\begin{theorem}[Stability of Monge maps]
\label{th:map-holder-tv} The following inequalities hold for all probability measures $\mu,\nu$ on a bounded set $\Y$
\begin{align}
\nr{T_\nu - T_\mu}_{\L^2(\X)} &\leq C \nr{\nu - \mu}_{\TV}^{1/5}.\\
 \nr{T_\nu - T_\mu}_{\L^2(\X)} &\leq C \Wass_1(\mu,\nu)^{2/15},
 \end{align}
where $C$ only depend on $d,\X$ and $\Y$.
\end{theorem}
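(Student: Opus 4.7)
The plan is to reduce the bound on $\|T_\mu - T_\nu\|_{\L^2(\X)}$ to an $\L^\infty$ bound on the Brenier potentials via integration by parts, then to convert that into a bound on a suboptimality gap, which is in turn controlled by $\TV$ or $\Wass_1$.

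\textbf{Step 1 (integration by parts).} The first task is to establish
\begin{equation*}
  \|T_\mu - T_\nu\|^2_{\L^2(\X)} \leq C(\X, \Y)\,\|\phi_\mu - \phi_\nu\|_{\L^\infty(\X)}.
\end{equation*}
Expanding $\int_\X \|\nabla(\phi_\mu - \phi_\nu)\|^2 \dd x$ by integration by parts, the convexity of $\phi_\mu, \phi_\nu$ plays a decisive role: the distributional Laplacians $\Delta\phi_\mu, \Delta\phi_\nu$ are positive Radon measures on $\X$, and by the divergence theorem their total masses are each bounded by $M_\Y \cdot \sigma(\partial \X)$, since $T_\mu, T_\nu$ take values in $\Y \subset B(0,M_\Y)$. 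The boundary contributions are controlled similarly by the $M_\Y$-Lipschitz bound on $\phi_\mu, \phi_\nu$. This replaces the Lipschitz-regularity hypothesis on $T_\mu$ from Theorem~\ref{th:hold-cont-reg} by a weaker $\L^\infty$ control on the potentials themselves.

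\textbf{Step 2 (Legendre duality and the suboptimality gap).} Because $\phi_\mu = \psi_\mu^*$ and $\phi_\nu = \psi_\nu^*$, a short computation with the Legendre transform shows $\|\phi_\mu - \phi_\nu\|_{\L^\infty(\X)} = \|\psi_\mu - \psi_\nu\|_{\L^\infty(\Y)}$. The natural intermediate quantity connecting $\psi_\mu - \psi_\nu$ to $\mu - \nu$ is the suboptimality gap
\begin{equation*}
\mathcal{G}(\mu, \nu) := \int_\Y (\psi_\nu - \psi_\mu)\,\dd(\mu - \nu),
\end{equation*}
which is $\geq 0$ by summing the two Kantorovich optimality inequalities for $(\rho, \mu)$ and $(\rho, \nu)$. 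It satisfies $\mathcal{G}(\mu,\nu) \leq 2 M_\X \diam(\Y)\,\|\nu - \mu\|_{\TV}$ by $\L^\infty$–$\L^1$ duality together with \eqref{eq:linfty}, and $\mathcal{G}(\mu,\nu) \leq 2 M_\X\,\Wass_1(\mu, \nu)$ by Kantorovich--Rubinstein together with \eqref{eq:lipschitz}.

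\textbf{Step 3 (reverse estimate; main obstacle).} The crux is to prove a reverse-type inequality of the form
\begin{equation*}
\|\psi_\mu - \psi_\nu\|_{\L^\infty(\Y)}^{5/2} \leq C\,\mathcal{G}(\mu,\nu),
\end{equation*}
with a dimension-independent constant. Chained with Steps 1 and 2 this would yield $\|T_\mu - T_\nu\|^2_{\L^2(\X)} \leq C\,\mathcal{G}^{2/5} \leq C\,\|\nu - \mu\|_{\TV}^{2/5}$, matching the claimed $1/5$-Hölder exponent. The difficulty is dimensional: the naive Lipschitz-interpolation $\|f\|_{\L^\infty} \lesssim (\Lip f)^{d/(d+p)} \|f\|_{\L^p}^{p/(d+p)}$ has dimension-dependent exponents, so one must exploit that $\psi_\mu, \psi_\nu$ are not merely Lipschitz but are Legendre transforms of convex potentials with gradients confined to $\X$, yielding a structural pointwise-to-integral estimate free of dimensional factors. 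For the $\Wass_1$ inequality with its weaker exponent $2/15$, I would combine the $\TV$ result with a regularization argument: approximate $\mu, \nu$ by mollifications whose $\TV$-distance is controlled by $\Wass_1$ via a scale parameter, and optimize the scale — the factor $2/3$ in $2/15 = (2/5)(2/3)$ should emerge from this optimization. This reverse estimate is the heart of the matter and where all the technical work will lie.
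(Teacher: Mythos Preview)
Your Steps 1 and 2 are sound and overlap with ingredients the paper uses: Step 1 is essentially Proposition~\ref{prop:chazal}, and the Legendre-duality inequality $\|\phi_\mu - \phi_\nu\|_{\L^\infty(\X)} \leq \|\psi_\mu - \psi_\nu\|_{\L^\infty(\Y)}$ is correct (equality need not hold, but you only need $\leq$).

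The genuine gap is Step 3. A dimension-independent inequality of the form $\|\psi_\mu - \psi_\nu\|_{\L^\infty(\Y)}^{5/2} \leq C\,\mathcal{G}(\mu,\nu)$ is exactly what the paper \emph{avoids}, and there is good reason to believe it is out of reach: the route through an $\L^\infty$ bound on the potentials is precisely what produces the dimension-dependent exponent in Corollary~\ref{coro:Berman} (via Lemma~\ref{lemma:lip-l2}). Note that $\mathcal{G}(\mu,\nu)$ only sees the values of $\psi_\mu - \psi_\nu$ on $\mathrm{spt}(\mu)\cup\mathrm{spt}(\nu)$, so there is no mechanism for it to control a uniform norm over all of $\Y$ without a Lipschitz-interpolation step, which is inherently dimensional. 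The paper's proof never attempts this. Instead it (i) works in the semi-discrete setting and proves an $\L^2(\mu+\nu)$ bound on $\psi_\mu - \psi_\nu$ (Theorem~\ref{th:stab-dual}), using a Taylor expansion of $\psi\mapsto G(\psi)$, the Brunn--Minkowski inequality to control Laguerre cells along the segment $\psi^t$, and a discrete Poincar\'e--Wirtinger inequality for $-\mathrm{D}G$; then (ii) passes from $\L^2(\mu+\nu)$ to the maps by a Chebyshev argument: on the set $\Y_\alpha$ where $|\psi_\mu-\psi_\nu|\leq \eps^\alpha$ one has uniform closeness, and the restricted Legendre transforms $\phi^{k,\alpha}$ agree with $\phi^k$ on a set of large $\rho$-measure. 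Optimizing $\alpha$ yields the $1/5$.

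Your proposed route to the $\Wass_1$ exponent via mollification is also problematic: the standard bound $\|\mu_h - \nu_h\|_{\TV} \lesssim h^{-1}\Wass_1(\mu,\nu)$ comes with a factor depending on $\|\nabla \rho_h\|_{\L^1}$, and balancing against $\Wass_1(\mu,\mu_h)\lesssim h$ does not obviously produce the clean dimension-free $2/3$. In the paper the factor $2/3$ arises from the same semi-discrete analysis as the $\TV$ case, by choosing the interpolation parameter $T \sim \Wass_1^{1/3}$ in the bound $T\langle v^2, G(\psi^0)\rangle \lesssim \Wass_1 + T\cdot T^2$ rather than from any regularization.
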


The proof of this stability theorem is deduced from the stability of
dual potentials, which may be interesting in its own.

\begin{theorem}[Stability of dual potentials]
  \label{th:stab-dual}
 Let $\mu^0,\mu^1\in \Prob(Y)$ and let $\psi^0, \psi^1$ be the associated dual potentials (see Def.~\ref{def:potentials}). Then,
\begin{align}
&\nr{\psi^1 - \psi^0}_{\L^2(\mu^0+\mu^1)} \leq C \nr{\mu^1 - \mu^0}^{\frac{1}{2}}_{\TV},\\ 
&\nr{\psi^1 - \psi^0}_{\L^2(\mu^0+\mu^1)} \leq C \Wass_1(\mu^1, \mu^0)^{\frac{1}{3}}, \end{align}
where $C$ only depends on $d,\X$ and $\Y$
\end{theorem}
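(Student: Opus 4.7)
My plan is to deduce both estimates from a single quantitative strong-concavity analysis of the Kantorovich dual functional, and then specialise to the $\TV$ and $\Wass_1$ cases by bounding the resulting ``energy gap'' in two different ways.

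\textbf{Step 1: Monotonicity from duality.} For each $i\in\{0,1\}$, the Brenier pair $(\phi^i,\psi^i)$ minimises $(\tilde\phi,\tilde\psi)\mapsto\int\tilde\phi\,\dd\rho+\int\tilde\psi\,\dd\mu^i$ over admissible pairs $\tilde\phi(x)+\tilde\psi(y)\geq\sca{x}{y}$. Testing the $\mu^0$-problem against $(\phi^1,\psi^1)$ and the $\mu^1$-problem against $(\phi^0,\psi^0)$ and summing gives the fundamental monotonicity inequality
\[
I := \int_\Y(\psi^0-\psi^1)\,\dd(\mu^1-\mu^0) \geq 0.
\]
Since $\psi^1-\psi^0$ is $2M_\X$-Lipschitz by \eqref{eq:lipschitz} and bounded by $2M_\X\diam(\Y)$ by \eqref{eq:linfty}, the Kantorovich--Rubinstein duality and the definition of $\TV$ yield
\[
I\leq 2M_\X\,\Wass_1(\mu^0,\mu^1),\qquad I\leq 2M_\X\diam(\Y)\,\nr{\mu^1-\mu^0}_{\TV}.
\]

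\textbf{Step 2: Fenchel--Young decomposition.} Introduce the Fenchel--Young gap $\delta^i(x,y):=\phi^i(x)+\psi^i(y)-\sca{x}{y}\geq 0$, which vanishes precisely at $y=T_i(x)=\nabla\phi^i(x)$. Combining the Fenchel equality at $T_i$ with the Fenchel inequality for the other potential gives the pointwise identities
\[
(\psi^1-\psi^0)\circ T_0 = (\phi^0-\phi^1) + \delta^1(\,\cdot\,,T_0),\qquad
(\psi^1-\psi^0)\circ T_1 = (\phi^0-\phi^1) - \delta^0(\,\cdot\,,T_1),
\]
whose difference, integrated against $\rho$, recovers $I=\int_\X[\delta^0(x,T_1(x))+\delta^1(x,T_0(x))]\,\dd\rho$. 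In particular both Fenchel gaps are controlled in $\L^1(\rho)$ by $I$.

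\textbf{Step 3: $\L^2$ expansion and closure.} Setting $u:=\phi^0-\phi^1$ and $\delta^i_\star:=\delta^i(\,\cdot\,,T_{1-i})$, the change of variables $(T_i)_\#\rho=\mu^i$ combined with Step~2 gives
\[
\nr{\psi^1-\psi^0}^2_{\L^2(\mu^0+\mu^1)} \leq C\bigl(\nr{u}^2_{\L^2(\rho)} + \nr{\delta^0_\star}^2_{\L^2(\rho)} + \nr{\delta^1_\star}^2_{\L^2(\rho)}\bigr)
\]
after absorbing the cross term by Cauchy--Schwarz. The gap terms are controlled by the trivial interpolation $\nr{\delta^i_\star}^2_{\L^2(\rho)}\leq\nr{\delta^i_\star}_\infty\,\nr{\delta^i_\star}_{\L^1(\rho)}\leq C\cdot I$, since $\delta^i$ is uniformly bounded on $\X\times\Y$. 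What remains is to bound $\nr{u}^2_{\L^2(\rho)}=\nr{\phi^0-\phi^1}^2_{\L^2(\rho)}$ by a suitable power of $I$. The plan is to use $\nabla u=T_0-T_1$ together with a dimension-free Poincaré-type inequality on the convex set $\X$ under Lebesgue measure, coupled with the uniform bound $\nr{T_i}_\infty\leq M_\Y$, to bootstrap the $\L^1$-control of $\delta^i_\star$ into an $\L^2$-control of $u$.

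The main obstacle is this last bound on $\nr{u}^2_{\L^2(\rho)}$, precisely because no regularity is available on $\mu^0,\mu^1$ (so Lemma~\ref{lemma:reg} does not apply) and one cannot afford the integration-by-parts argument underlying Theorem~\ref{th:Berman-original} without sacrificing dimension-independence. The two different Hölder exponents $1/2$ (for $\TV$) and $1/3$ (for $\Wass_1$) in the statement should then emerge from two interpolation regimes in this closure step: a linear bound $\nr{u}^2_{\L^2(\rho)}\leq C\,I$ combined with the $\TV$ estimate of Step~1 yields the $1/2$ exponent, while a weaker $2/3$-power bound combined with the $\Wass_1$ estimate yields the $1/3$ exponent.
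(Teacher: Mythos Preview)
Your Steps 1 and 2 are correct and standard: the monotonicity $I\geq 0$ with $I=\int(\delta^0_\star+\delta^1_\star)\,\dd\rho$, and the Fenchel--Young identities expressing $(\psi^1-\psi^0)\circ T_i$ as $u\pm\delta^{1-i}_\star$, are exactly right. The genuine gap is Step~3, and you identify it yourself: bounding $\nr{u}^2_{\L^2(\rho)}$ by a power of $I$. Your proposed mechanism---a Poincar\'e inequality on $\X$ using $\nabla u=T_0-T_1$---does not close. Poincar\'e would give $\nr{u-\bar u}_{\L^2(\rho)}\leq C\nr{T_0-T_1}_{\L^2(\rho)}$, but the right-hand side is precisely the quantity in Theorem~\ref{th:map-holder-tv}, which the paper deduces \emph{from} the present theorem, not the other way around. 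The uniform bound $\nr{\nabla u}_\infty\leq 2M_\Y$ is of no help: a Lipschitz bound on $u$ combined with $\L^1$ control of $\delta^i_\star$ gives nothing on $u$, because the $\delta^i_\star$ do not dominate $u$ pointwise (they are nonnegative while $u$ has no sign). Indeed, substituting $u=(\psi^1-\psi^0)\circ T_0-\delta^1_\star$ back into your Step~3 inequality shows the estimate is circular: $\nr{u}^2_{\L^2(\rho)}\leq 2\nr{\psi^1-\psi^0}^2_{\L^2(\mu^0)}+2\nr{\delta^1_\star}^2_{\L^2(\rho)}$, so the left-hand side you want to bound reappears on the right.

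The paper takes a completely different route that avoids $u=\phi^0-\phi^1$ altogether. It first reduces by density (Lemma~\ref{lemma:density}) to the semi-discrete case where $\mu^0,\mu^1$ are supported on a common finite set, then linearly interpolates \emph{in the dual potential} $\bm{\psi}^t=(1-t)\bm{\psi}^0+t\bm{\psi}^1$ and writes $I=\sca{G(\bm{\psi}^1)-G(\bm{\psi}^0)}{v}=\int_0^1\sca{\DD G(\bm{\psi}^t)v}{v}\,\dd t$ with $v=\bm{\psi}^1-\bm{\psi}^0$. The key new ingredient is a \emph{discrete} Poincar\'e--Wirtinger inequality on the target side (Proposition~\ref{prop:poincare}), which bounds the variance $\sca{v^2}{G(\bm{\psi}^t)}-\sca{v}{G(\bm{\psi}^t)}^2$ by $-C\sca{\DD G(\bm{\psi}^t)v}{v}$; this is proved via the finite-volume structure of Laguerre cells and is dimension-free in the exponent. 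The mean term $\sca{v}{G(\bm{\psi}^t)}$ is handled using the normalisation \eqref{eq:th:zeromean} and a Brunn--Minkowski estimate (Lemma~\ref{lemma:tv}) controlling $\nr{G(\bm{\psi}^t)-G(\bm{\psi}^0)}_1$ along the interpolation; this is where the $\TV$ vs.\ $\Wass_1$ dichotomy and the exponents $1/2$ vs.\ $1/3$ actually arise, through the choice of the integration window $[0,T]$. Nothing in your outline corresponds to either Proposition~\ref{prop:poincare} or Lemma~\ref{lemma:tv}, and these are the heart of the argument.
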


\begin{remark}[Non-optimality]
The Hölder-exponent $\frac{2}{15}$ in Theorem~\ref{th:map-holder-tv}
comes from the proof, but we see no reason why it should be the
optimal exponent. Combining Theorem~\ref{th:map-holder-tv} with
Proposition~\ref{prop:monge-embedding}.(iii), we see that the best
exponent belongs to the range $[\frac{2}{15},\frac{1}{2}]$.
\end{remark}

\begin{remark}[Brenier embedding]
Instead of working with the optimal transport maps $T_\mu$, one could
also directly work with the Brenier potentials $\phi_\mu \in
\L^2(\X)$. A straightforward modification of the proof of
Theorem~\ref{th:map-holder-tv} shows Hölder-continuity of the map $\mu
\in \Prob(\Y)\mapsto\phi_\mu \in L^2(\X)$, with slightly improved
exponents: the exponent would be $1/3$ with respect to the Wasserstein
distance and $2/9$ with respect to the TV distance.
\end{remark}

\begin{remark}[McDiarmid's inequality]
Assume that $\mu_N,\nu_N$ are uniform on point clouds with $N$ points, and that their support has $N-1$ common points, i.e.
\begin{equation}\mu_N = \frac{1}{N} \sum_{1\leq i\leq N}\delta_{y_i},\qquad  \nu_N = \frac{1}{N}\sum_{2\leq i\leq N+1} \delta_{y_i}.
\end{equation}
Then, the theorem gives $\nr{T_{\mu_N} - T_{\nu_N}}\leq C N^{-1/5}$. This shows
that if one considers the function
\begin{equation}
 f(y_1,\hdots,y_N) = \|T_{\mu} - T_{\frac{1}{N}\sum_i \delta_{y_i}}\|_{\L^2(\rho)},
\end{equation}
then,
\begin{equation}
\begin{aligned}|&f(y_1,\hdots,y_{i-1},y_i,y_{i+1}, \hdots,y_N) - \\&f(y_1,\hdots, y_{i-1}, \hat{y}_i, y_{i+1},\hdots, y_N)| \leq \frac{C}{N^{-1/5}}. 
\end{aligned}
\end{equation}
This bound is in $N^{-1/5}$, which ensures some statistical
consistency but is not enough to get concentration results with a direct use of McDiarmid's inequality -- one would need a
bound in $N^{-\alpha}$ with $\alpha > \frac{1}{2}$ to use this
inequality and deduce concentration results from it.
\end{remark}

\subsection{From the semi-discrete to the general case}
We will establish Theorem~\ref{th:stab-dual} in the case  where both  measures  $\mu^0$ and $\mu^1$ are supported on the same set, which is finite. We show in this section that  the general case can be deduced. This follows from a simple density argument, which is summarized in the following lemma.

\begin{lemma} \label{lemma:density} Given any $\mu^0,\mu^1\in\Prob(\Y)$, there
exists sequences $(\mu^k_N)_{N\geq 1}$ such that 
\begin{itemize}
\item $\mu^0_N$ and $\mu^1_N$ have the same support, which is finite,
\item $\lim\sup_{N\to +\infty} \|\mu_N^0 - \mu_N^1\|_\TV \leq \|\mu^0 - \mu^1\|_\TV $,
\item $\lim_{N\to +\infty} \Wass_1(\mu^0_N, \mu^1_N) = \Wass_1(\mu^0, \mu^1)$,
\item Denote $\psi^k = \psi_{\mu^k}$ and $\psi^k_N = \psi_{\mu^k_N}$. Then,
\begin{equation}\lim_{N\to +\infty} \nr{\psi^1_N - \psi^0_N}_{\L^2(\mu^0_N + \mu^1_N)} = 
\nr{\psi^1 - \psi^0}_{\L^2(\mu^0 + \mu^1)}.
\end{equation}
\end{itemize}
\end{lemma}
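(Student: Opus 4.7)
\emph{Construction.} For each $N\geq 1$, partition $\Y$ into finitely many Borel cells $(C_i^N)_{i=1}^{K_N}$ of diameter at most $1/N$, pick a representative $y_i^N\in C_i^N$ in each cell, and let $S_N = \{y_i^N : 1\leq i\leq K_N\}$. Define
\[
\mu^k_N \;=\; \Bigl(1-\tfrac1N\Bigr)\sum_{i=1}^{K_N} \mu^k(C_i^N)\,\delta_{y_i^N} \;+\; \tfrac{1}{N\,K_N}\sum_{i=1}^{K_N}\delta_{y_i^N} \qquad (k=0,1).
\]
The small uniform perturbation is there only to guarantee $\mathrm{supp}(\mu^0_N)=\mathrm{supp}(\mu^1_N)=S_N$, which is the first bullet. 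Both the TV distance and the $\Wass_1$ distance between $\mu^k_N$ and the unperturbed discretization $\tilde\mu^k_N := \sum_i \mu^k(C_i^N)\delta_{y_i^N}$ are $O(1/N)$, so the perturbation disappears in the limits below.

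\emph{Bullets 2 and 3.} A direct computation on the partition gives $\nr{\tilde\mu^0_N-\tilde\mu^1_N}_\TV = \sum_i |\mu^0(C_i^N)-\mu^1(C_i^N)| \leq \nr{\mu^0-\mu^1}_\TV$, and the ``move-mass-to-the-representative'' plan realizes $\Wass_1(\mu^k,\tilde\mu^k_N)\leq 1/N$. Together with the $O(1/N)$ perturbation estimate and the triangle inequality, these imply the $\limsup$ bound on TV and the full convergence $\Wass_1(\mu^0_N,\mu^1_N)\to \Wass_1(\mu^0,\mu^1)$, and in particular $\mu^k_N\rightharpoonup \mu^k$ weakly.

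\emph{Bullet 4, the main step.} The strategy is to prove that $\psi^k_N$ converges uniformly to $\psi^k$ on $\Y$, and then combine with the weak convergence of $\mu^0_N+\mu^1_N$. By Remark~\ref{rem:uniqueness}, the family $(\psi^k_N)_{N\geq 1}$ is $M_\X$-Lipschitz and uniformly bounded by $M_\X\diam(\Y)$. Arzelà–Ascoli gives precompactness in $C(\Y)$, so along any subsequence we can extract a further subsequence $\psi^k_{N_j}\to\widetilde\psi^k$ uniformly, with $\widetilde\psi^k$ Lipschitz. Standard stability of Kantorovich potentials (see e.g.~\cite{santambrogio2015optimal}) implies that $\widetilde\psi^k$ is a dual potential associated to $\mu^k$. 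The crucial point is then to pass the normalization $\int\psi^k_{N_j}\,\dd\mu^k_{N_j}=0$ to the limit: using
\[
\Bigl|\int\psi^k_{N_j}\,\dd\mu^k_{N_j} - \int\widetilde\psi^k\,\dd\mu^k\Bigr| \;\leq\; \nr{\psi^k_{N_j}-\widetilde\psi^k}_\infty \;+\; \Bigl|\int\widetilde\psi^k\,\dd(\mu^k_{N_j}-\mu^k)\Bigr|,
\]
the first term vanishes by uniform convergence and the second by weak convergence, so $\int\widetilde\psi^k\,\dd\mu^k=0$. Uniqueness of the normalized dual potential (Remark~\ref{rem:uniqueness}) forces $\widetilde\psi^k=\psi^k$. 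Since every subsequence has a sub-subsequence converging to the same limit, the full sequence $\psi^k_N$ converges uniformly to $\psi^k$ on $\Y$. Consequently $(\psi^1_N-\psi^0_N)^2 \to (\psi^1-\psi^0)^2$ uniformly, and pairing uniform convergence of bounded continuous integrands with weak convergence of $\mu^0_N+\mu^1_N$ yields
\[
\int(\psi^1_N-\psi^0_N)^2\,\dd(\mu^0_N+\mu^1_N) \;\longrightarrow\; \int(\psi^1-\psi^0)^2\,\dd(\mu^0+\mu^1).
\]

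\emph{Main obstacle.} The delicate point is not the elementary discretization estimates but the identification of each subsequential uniform limit $\widetilde\psi^k$ with the genuine normalized dual potential $\psi^k$: Arzelà–Ascoli alone yields only some Lipschitz limit, and one needs (i) the stability of Kantorovich potentials under weak convergence of the target measure to recognize $\widetilde\psi^k$ as a dual potential for $\mu^k$, and (ii) the passage to the limit in $\int\psi^k_{N_j}\,\dd\mu^k_{N_j}=0$ to fix the correct additive constant. Everything else is bookkeeping.
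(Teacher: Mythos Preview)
Your proof is correct and follows essentially the same approach as the paper: the same partition-and-perturb construction, the same TV and $\Wass_1$ estimates via triangle inequalities, and the same Arzel\`a--Ascoli/stability-of-potentials/normalization argument to force $\widetilde\psi^k=\psi^k$. If anything, your write-up is slightly more explicit about why the \emph{full} sequence converges and about pairing the uniform convergence of $(\psi^1_N-\psi^0_N)^2$ with the weak convergence of $\mu^0_N+\mu^1_N$.
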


\begin{proof}
For any $N>0$, we consider a finite partition  $\Y = \sqcup_{1\leq i\leq N} \Y^N_i$, we let $\eps_N = \max_i \diam(\Y_i^N)$ and we assume that $\lim_{N\to +\infty}\eps_N = 0$. Then, we define
\begin{equation} \mu^k_N = \sum_{1\leq i\leq N} \left[\left(1-\frac{1}{N}\right) \mu^k(\Y_i^N) + \frac{1}{N^2}\right] \delta_{y_i^N}, 
\end{equation}
where $y_i^N \in \Y_i^N$. Then, it is easy to check that the support of the measures $\mu^0_N$ and $\mu^1_N$ is the set $\{y_1^N,\hdots, y^N_N\}$. Moreover, 
\begin{equation} \nr{\mu^1_N - \mu^0_N}_\TV \leq \nr{\mu^1 - \mu^0}_\TV.
\label{eq:tvstab}
\end{equation}
In addition,  $\Wass_1(\mu^k_N,\mu^k) \leq \eps_N \xrightarrow{N\to +\infty} 0.$
Combined with the triangle inequality, we deduce
\begin{align}
|\Wass_1(\mu^0_N,\mu^1_N) - \Wass_1(\mu^0,\mu^1)| &= |\Wass_1(\mu^0_N,\mu^1_N) - \Wass_1(\mu^0_N,\mu^1) + \Wass_1(\mu^0_N,\mu^1) - \Wass_1(\mu^0,\mu^1)| \notag\\
&\leq |\Wass_1(\mu^0_N,\mu^1_N) - \Wass_1(\mu^0_N,\mu^1)| + |\Wass_1(\mu^0_N,\mu^1) - \Wass_1(\mu^0,\mu^1)| \notag\\
&\leq \Wass_1(\mu^1_N,\mu^1) + \Wass_1(\mu^0_N,\mu^0) \notag\\
&\leq 2\eps_N \xrightarrow{N\to+\infty} \Wass_1(\mu^0,\mu^1) \label{eq:wassstab}
\end{align}

The last statement also follows from standard arguments from optimal
transport, which we summarize now. By \eqref{eq:lipschitz}--\eqref{eq:linfty}, we know that 
the sequences $(\psi^k_N)$ are uniformly bounded and uniformly Lipschitz. By Arzelà-Ascoli's theorem, 
this implies that the sequence $(\psi^k_N)$ admits a  subsequence converging uniformly
to some $\tilde{\psi}^k$. By \cite[Theorem 1.51]{santambrogio2015optimal},
$\tilde{\psi}^k$ is a Kantorovich
potential for the optimal transport problem between $\rho$ and $\mu^k$.
Since in addition,
\begin{equation} 0 = \lim_{N\to +\infty} \int \psi^k_N \dd\mu^k_N = \int
\tilde{\psi}^k \dd\mu^k, 
\end{equation}
we obtain, by Remark~\ref{rem:uniqueness}
that $\tilde{\psi}^k = \psi^k$. This shows that the whole sequence
$\psi^k_N$ converges uniformly to $\psi^k$. This implies as desired
\begin{equation*}
  \lim_{N\to +\infty} \int (\psi^1_N - \psi^0_N)^2 \dd(\mu^0_N + \mu^1_N) =
  \int (\psi^1 - \psi^0)^2 \dd(\mu^0 + \mu^1). \qedhere
\end{equation*}
\end{proof}

\subsection{Semi-discrete optimal transport}
In the remaining of this section, we work in the \emph{semi-discrete}
setting, assuming that all measures are supported on a (fixed) set
$\{y_1,\hdots,y_N\}$. Assuming that $\mu = \sum_{1\leq i\leq
  N}\mu_{i}\delta_{y_i}$, the Kantorovich dual to the optimal
transport problem between $\rho$ and $\mu$ problem can be written as
(e.g. \cite[Eq. (2.6)]{htter2019minimax}):
\newcommand{\Dual}{\mathrm{(D)}}
\newcommand{\Kant}{\mathcal{K}}
\begin{align} \label{eq:dual-sd}
    &\Dual = \min_{\psi} \int_{\X} \psi^* \dd \rho + \int_{\Y} \psi \dd\mu
\end{align}
where the minimum is taken among functions $\psi$ on $\{y_1,\hdots,y_N\}$. To simplify notations, we will often conflate the function $\psi$ with the vector $\bm{\psi}\in \Rsp^N$ defined by $\bm{\psi}_i = \psi(y_i)$. This vector $\bm{\psi}$ is also referred to as a (dual) \textit{potential} and defines a partition of the domain $\X$ into so-called Laguerre cells, described for all $1 \leq i \leq N$ by
\begin{equation}
    V_i(\bm{\psi}) = \{ x \in \X \mid \forall j,  \bm{\psi}_j \geq \bm{\psi}_i + \sca{y_j - y_i}{x}  \},
\end{equation}
so that
\begin{equation} \label{eq:dual-sdd}
  \Dual =  \min_{\bm{\psi} \in \Rsp^N}  \Kant(\bm{\psi})
\end{equation}
\begin{align}
  \hbox{ where } \Kant(\bm{\psi}) = \sum_{i=1}^N \int_{V_i(\bm{\psi})} (\sca{x}{y_i} - \bm{\psi}_i) \dd\rho(x) + \sum_{i=1}^N \mu_{i} \bm{\psi}_i.
\end{align}
By Theorem 1.1 in \cite{kitagawa:hal-01290496} (see also \cite{aurenhammer1998minkowski}),
\begin{equation} \nabla \Kant = G(\bm{\psi}) - \nu
\end{equation}
where
\begin{align}
&G_i(\bm{\psi})  = \rho(V_i(\bm{\psi}))\\
& G(\bm{\psi}) = (G_i(\bm{\psi}))_{1\leq i\leq N} \in\Rsp^N.
\end{align}
Therefore, a potential $\bm{\psi}$ solves problem (\ref{eq:dual-sdd}) if and
only if $G(\bm{\psi}) = \nu$.  The optimal potential $\bm{\psi}$ in
\eqref{eq:dual-sdd} then defines a Monge map $T: \X \rightarrow \Y$
that is piecewise constant, sending each point $x$ in $V_i(\bm{\psi})$ to
$y_i$. Alternatively, one can define $T = \nabla \phi$ where
$\phi=\psi^*$ is the Legendre transform of the function $\psi$ defined
by $\psi(y_i) = \bm{\psi}_i$.
Given a potential $\bm{\psi} \in \Rsp^N$, we denote
\begin{equation} \label{eq:mupsi}
\mu_{\bm{\psi}} = \sum_{1\leq i\leq N} G_i(\bm{\psi})\delta_{y_i}.
\end{equation}

\subsubsection*{Jacobian of $G$}
We consider the set $S_+\subseteq \Rsp^N$ of potentials such that all
Laguerre cells $V_i(\bm{\psi})$ contain some mass, defined by
\begin{equation}
S_+ = \{ \bm{\psi} \in\Rsp^N \mid \forall i, G_i(\bm{\psi}) > 0 \}.    
\end{equation}
From Theorems 1.3 and 4.1 in \cite{kitagawa:hal-01290496}, we know
that the map $G$ is $\Class^1$ on the set $S_+$. By Theorem~1.3 in
\cite{kitagawa:hal-01290496}, if $\bm{\psi}\in S_+$, the partial
derivatives of $G$ are given by
\begin{equation} \label{eq:DG}
     \begin{cases}
\frac{\partial G_i}{\partial \bm{\psi}_j}(\bm{\psi}) = \frac{\vol^{d-1}(V_i(\bm{\psi}) \cap V_j(\bm{\psi}))}{\nr{y_j-y_i}}  & \hbox{ for } i\neq j\\
\frac{\partial G_i}{\partial \bm{\psi}_i}(\bm{\psi}) = -\sum_{j\neq i} \frac{\partial G_i}{\partial \bm{\psi}_j}(\bm{\psi}) 
\end{cases} 
\end{equation}

\subsection{Proof of Theorem~\ref{th:stab-dual} (Stability of potentials) in the semi-discrete case}

In this section, we prove Theorem~\ref{th:stab-dual} when the measures
$\mu,\nu$ have the same finite support. This version of
Theorem~\ref{th:stab-dual} is rephrased as
Theorem~\ref{th:l2-potential-tv}.

\begin{theorem}
  \label{th:l2-potential-tv} Let $\bm{\psi}^0, \bm{\psi}^1$ be two potentials in $S_+$ satisfying
  \begin{equation} \label{eq:th:zeromean}
    \sca{\bm{\psi}^0}{G(\bm{\psi}^0)}_{\Rsp^N} = \sca{\bm{\psi}^1}{G(\bm{\psi}^1)}_{\Rsp^N} = 0.
  \end{equation}
  Then, with $\mu^k = \mu_{\bm{\psi}^k},$
  \begin{equation} \label{eq:thsd:stab-tv}
    \sca{(\bm{\psi}^1 - \bm{\psi}^0)^2}{G(\bm{\psi}^0)+G(\bm{\psi}^1)}_{\Rsp^N} \leq C \nr{\mu^1 - \mu^0}_{\TV}. \end{equation}
\begin{equation}\label{eq:stab-dual:w}
\sca{(\bm{\psi}^1 - \bm{\psi}^0)^2}{G(\bm{\psi}^0)+G(\bm{\psi}^1)}_{\Rsp^N} \leq C 
\Wass_1(\mu^1, \mu^0)^{\frac{2}{3}}, \end{equation}
where $C$ depends only on $d,\X$ and $\Y$.
\end{theorem}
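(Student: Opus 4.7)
The plan is to combine convex duality for the Kantorovich functional $\Kant$ with a coercivity (Poincaré-type) estimate for the Dirichlet energy that arises from the Hessian of $\Kant$. Let $u := \bm{\psi}^1 - \bm{\psi}^0 \in \Rsp^N$ and introduce the ``Kantorovich energy gap''
\begin{equation*}
\mathcal{E} := \sca{\bm{\psi}^0 - \bm{\psi}^1}{\mu^1 - \mu^0}_{\Rsp^N}.
\end{equation*}
First I would establish $\mathcal{E} \geq 0$ together with two natural upper bounds. Non-negativity follows by adding the two optimality inequalities $\Kant_{\mu^k}(\bm{\psi}^k) \leq \Kant_{\mu^k}(\bm{\psi}^{1-k})$ for $k = 0, 1$, which themselves rely on $G(\bm{\psi}^k) = \mu^k$ and the convexity of $\Kant$. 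Hölder's inequality together with \eqref{eq:linfty} gives $\mathcal{E} \leq 2 M_\X \diam(\Y)\, \|\mu^0 - \mu^1\|_{\TV}$, while Kantorovich--Rubinstein duality applied to the $2 M_\X$-Lipschitz extensions of $\psi^0, \psi^1$ to $\Y$ (using \eqref{eq:lipschitz}) gives $\mathcal{E} \leq 2 M_\X \Wass_1(\mu^0, \mu^1)$.

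Next, I would express $\mathcal{E}$ as a Dirichlet-type energy along the straight-line interpolation $\bm{\psi}^t := (1-t) \bm{\psi}^0 + t \bm{\psi}^1$. Using the identity $\mu^1 - \mu^0 = \bigl(\int_0^1 DG(\bm{\psi}^t) \, dt\bigr) u$ and the Laplacian structure \eqref{eq:DG} of $DG$, one obtains
\begin{equation*}
\mathcal{E} = \int_0^1 \tfrac{1}{2} \sum_{i \neq j} \frac{\vol^{d-1}(V_i(\bm{\psi}^t) \cap V_j(\bm{\psi}^t))}{\|y_i - y_j\|} (u_i - u_j)^2 \, dt.
\end{equation*}
On the other hand, the target quantity $\sca{u^2}{G(\bm{\psi}^0) + G(\bm{\psi}^1)}$ equals the sum of the squared $\L^2(\X)$-norms of the piecewise constant functions $\bar u^k(x) := u_{i(x,k)}$ ($k=0,1$) defined on the Laguerre cells of $\bm{\psi}^k$. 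The core task is then a coercivity inequality converting the integrated Dirichlet energy into such $\L^2(\X)$-control.

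The main obstacle is precisely this coercivity step. By Cauchy--Schwarz the Dirichlet integrand dominates the square of the BV-seminorm of $\bar u^t$ on $\X$; the isoperimetric (or Gagliardo--Nirenberg--Sobolev) inequality in the unit-volume convex body $\X$ converts BV-control into $L^p$-control, and interpolation with $\|u\|_\infty \leq 2 M_\X \diam(\Y)$ yields an $\L^2(\X)$ estimate. A comparison argument, based on $\|\bar u^0\|_{\L^2}^2 - \|\bar u^t\|_{\L^2}^2 = \sca{u^2}{\mu^0 - \mu^t}$, then transfers this bound to $t=0$ and $t=1$. The delicate point is to keep the constants uniform in $t$, in the number $N$ of atoms, and in the configuration of the $y_i$, which requires convex-geometric input (Brunn--Minkowski in unit-volume convex bodies). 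The two distinct exponents ($1/2$ for $\TV$ and $1/3$ for $\Wass_1$) suggest moreover that no direct Poincaré-type estimate $\sca{u^2}{G(\bm{\psi}^0)+G(\bm{\psi}^1)} \leq C \mathcal{E}$ holds universally: the $\Wass_1$ half proceeds through a weaker coercivity of the form $\sca{u^2}{G(\bm{\psi}^0)+G(\bm{\psi}^1)} \leq C\mathcal{E}^{2/3}$, while the $\TV$ half requires a sharper bound exploiting the atomic structure of the total variation distance (or an elementary ``swap'' decomposition of $\mu^1 - \mu^0$).
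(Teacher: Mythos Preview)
Your overall framework---interpolating $\bm{\psi}^t=(1-t)\bm{\psi}^0+t\bm{\psi}^1$, writing $\mathcal{E}=\int_0^1(-\sca{\DD G(\bm{\psi}^t)u}{u})\,\dd t$, and bounding $\mathcal{E}$ by $\nr{\mu^0-\mu^1}_{\TV}$ and $\Wass_1(\mu^0,\mu^1)$---matches the paper's. However, the coercivity step, which you correctly flag as the main obstacle, has two genuine gaps.

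First, your proposed route ``Dirichlet $\Rightarrow$ BV $\Rightarrow$ isoperimetric $\Rightarrow$ $L^2$'' does not give constants uniform in $N$. Cauchy--Schwarz yields only $|\bar u^t|_{\mathrm{BV}}^2\leq \bigl(-\sca{\DD G(\bm{\psi}^t)u}{u}\bigr)\cdot\sum_{i\neq j}\nr{y_i-y_j}\,\vol^{d-1}(V_i(\bm{\psi}^t)\cap V_j(\bm{\psi}^t))$, and the second factor---the total weighted facet area of the Laguerre tessellation---blows up with $N$. The paper avoids this by proving directly a finite-volume Poincar\'e--Wirtinger inequality (its Proposition~\ref{prop:poincare}), using a segment-crossing argument from the finite-volume literature: along any segment $[x,y]$, the telescoping sum $\sum_{i\neq j}\sca{\frac{y-x}{\nr{y-x}}}{y_j-y_i}\chi_{ij}(x,y)$ is bounded by $\diam(\Y)$, which replaces the divergent factor above.

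Second, and more fundamentally, any such Poincar\'e inequality controls only the \emph{variance} $\sca{u^2}{G(\bm{\psi}^t)}-\sca{u}{G(\bm{\psi}^t)}^2$, never the full $L^2$ norm (the Dirichlet form vanishes on constants). The normalization \eqref{eq:th:zeromean} does \emph{not} make $\sca{u}{G(\bm{\psi}^t)}$ vanish; it only gives $\sca{\bm{\psi}^k}{G(\bm{\psi}^k)}=0$. Handling this mean term is exactly where the two exponents separate. The paper writes
\[
\sca{u}{G(\bm{\psi}^t)}=\sca{u}{G(\bm{\psi}^t)-G(\bm{\psi}^0)}+\sca{\bm{\psi}^1}{G(\bm{\psi}^0)-G(\bm{\psi}^1)},
\]
bounds the first piece by $\nr{u}_\infty\,\nr{G(\bm{\psi}^t)-G(\bm{\psi}^0)}_1$ and the second by $\Lip(\psi^1)\Wass_1(\mu^0,\mu^1)$, and then integrates the Poincar\'e inequality only over $t\in[0,T]$ (where Brunn--Minkowski gives $G_i(\bm{\psi}^t)\gtrsim G_i(\bm{\psi}^0)$). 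This yields
\[
T\,\sca{u^2}{G(\bm{\psi}^0)}\lesssim \mathcal{E}+\int_0^T\bigl(\nr{G(\bm{\psi}^t)-G(\bm{\psi}^0)}_1+\Wass_1(\mu^0,\mu^1)\bigr)^2\dd t.
\]
For the $\TV$ bound one uses Brunn--Minkowski in the form $\nr{G(\bm{\psi}^t)-G(\bm{\psi}^0)}_1\leq\nr{G(\bm{\psi}^1)-G(\bm{\psi}^0)}_1$ and takes $T=\tfrac14$; for the $\Wass_1$ bound one uses the form $\nr{G(\bm{\psi}^t)-G(\bm{\psi}^0)}_1\lesssim t$ and optimizes $T\sim\Wass_1^{1/3}$, which is precisely the origin of the exponent $\tfrac23$. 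Your proposal never isolates this mean term, and your suggested ``weaker coercivity $\sca{u^2}{G^0+G^1}\leq C\mathcal{E}^{2/3}$'' is not what is proved (nor would it recover the linear $\TV$ estimate).
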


We will require two preliminary
results. The next lemma follows from Brunn-Minkowski's inequality. This inequality has already appeared in the numerical analysis of Monge-Ampère
equations, see \cite{benamou2016discretization,nochetto2019pointwise}.

\begin{lemma} \label{lemma:tv} Let $\bm{\psi}^0,\bm{\psi}^1\in S^+$ and consider 
$\bm{\psi}^t = (1-t) \bm{\psi}^0 + t\bm{\psi}^1$. Then, 
\begin{equation}\label{eq:lemma:tv1}
\forall i,~ G_i(\bm{\psi}^t)^{\frac{1}{d}} \geq (1-t) G_i(\bm{\psi}^0)^{\frac{1}{d}} + tG_i(\bm{\psi}^1)^{\frac{1}{d}}
\end{equation}
In particular, $\bm{\psi}^t\in S^+$. Moreover,
\begin{equation} \label{eq:lemma:tv2}
\nr{G(\bm{\psi}^t) - G(\bm{\psi}^0)}_1 \leq \nr{G(\bm{\psi}^1) - G(\bm{\psi}^0)}_1, 
\end{equation}
\begin{equation}\label{eq:lemma:tv3}
 \nr{G(\bm{\psi}^t) - G(\bm{\psi}^0)}_1 \leq 2(1 - (1-t)^{d}). 
\end{equation}
\end{lemma}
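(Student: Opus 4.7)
The plan is to establish \eqref{eq:lemma:tv1} directly from the Brunn--Minkowski inequality applied to the Laguerre cells, and then derive the two $L^1$ bounds \eqref{eq:lemma:tv2} and \eqref{eq:lemma:tv3} as elementary consequences, exploiting the mass-conservation identity $\sum_i G_i(\bm{\psi}) = \vol(\X) = 1$ (valid because the Laguerre cells partition $\X$ and $\rho$ is the Lebesgue measure of unit mass on $\X$).

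The starting point is a simple geometric inclusion: the defining inequalities $\bm{\psi}_j - \bm{\psi}_i \geq \sca{y_j-y_i}{x}$ of $V_i(\bm{\psi})$ are affine in $\bm{\psi}$, so taking the $(1-t), t$ convex combination of the conditions witnessing $x_0 \in V_i(\bm{\psi}^0)$ and $x_1 \in V_i(\bm{\psi}^1)$ shows that $x_t := (1-t) x_0 + t x_1$ satisfies the conditions for $V_i(\bm{\psi}^t)$; moreover $x_t \in \X$ by convexity of $\X$. Hence $(1-t) V_i(\bm{\psi}^0) + t V_i(\bm{\psi}^1) \subseteq V_i(\bm{\psi}^t)$. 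Since the $V_i(\bm{\psi}^k)$ are convex (intersections of half-spaces with $\X$), applying Brunn--Minkowski to the Minkowski sum on the left and using $G_i(\bm{\psi}^t) = \vol(V_i(\bm{\psi}^t))$ yields \eqref{eq:lemma:tv1}; the membership $\bm{\psi}^t \in S_+$ is then immediate since both $G_i(\bm{\psi}^0)$ and $G_i(\bm{\psi}^1)$ are positive.

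For the two $L^1$ bounds, write $a_i := G_i(\bm{\psi}^0)$, $b_i := G_i(\bm{\psi}^1)$, $c_i := G_i(\bm{\psi}^t)$. Because $\sum_i a_i = \sum_i b_i = \sum_i c_i = 1$, one has $\nr{c-a}_1 = 2 \sum_i (a_i - c_i)^+$ and $\nr{b-a}_1 = 2 \sum_i (a_i - b_i)^+$. Bound \eqref{eq:lemma:tv3} follows by dropping the nonnegative term $t b_i^{1/d}$ in \eqref{eq:lemma:tv1}, obtaining $c_i \geq (1-t)^d a_i$, whence $(a_i - c_i)^+ \leq (1 - (1-t)^d) a_i$; summing and using $\sum_i a_i = 1$ gives the result. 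For \eqref{eq:lemma:tv2}, the key remark is that at any index $i$ where $a_i \geq c_i$, combining $a_i^{1/d} \geq c_i^{1/d}$ with \eqref{eq:lemma:tv1} forces $a_i \geq b_i$ (cancel $(1-t) a_i^{1/d}$); feeding this back into \eqref{eq:lemma:tv1} gives $c_i^{1/d} \geq b_i^{1/d}$, i.e. $a_i \geq c_i \geq b_i$. Hence $(a_i - c_i)^+ \leq a_i - b_i = (a_i - b_i)^+$ at such indices; summing over $i$ proves \eqref{eq:lemma:tv2}.

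The only nontrivial ingredient is spotting the Brunn--Minkowski inclusion; once that is in hand, the $L^1$ estimates reduce to elementary bookkeeping using the partition property of the Laguerre cells. I do not foresee a real obstacle beyond correctly combining these observations.
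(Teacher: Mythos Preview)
Your proof is correct and follows essentially the same approach as the paper: the inclusion $(1-t)V_i(\bm{\psi}^0)+tV_i(\bm{\psi}^1)\subseteq V_i(\bm{\psi}^t)$ followed by Brunn--Minkowski for \eqref{eq:lemma:tv1}, and then elementary total-variation bookkeeping for \eqref{eq:lemma:tv2}--\eqref{eq:lemma:tv3}. The only cosmetic difference is that the paper writes the TV identity as $\tfrac12\|c-a\|_1 = 1-\sum_i\min(c_i,a_i)$ and deduces \eqref{eq:lemma:tv2} from the one-line observation $c_i\geq\min(a_i,b_i)$, whereas you use the equivalent positive-part form $\|c-a\|_1 = 2\sum_i(a_i-c_i)^+$ and a short case analysis.
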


\begin{proof}
   Let $x^0 \in V_i(\bm{\psi}^0)$ and $x^1 \in V_i(\bm{\psi}^1)$. Then, for all $j\in \{1, \dots, N \}$,
\begin{equation} \begin{cases}
\bm{\psi}^0_j\geq \bm{\psi}^0_i + \sca{y_j-y_i}{x^0}\\
\bm{\psi}^1_j\geq \bm{\psi}^1_i + \sca{y_j-y_i}{x^1}.\\
\end{cases} 
\end{equation}
Taking the convex combination of these inequalities we get for all $j\in \{1, \dots, N \}$,
\begin{equation}
     \bm{\psi}^t_j\geq \bm{\psi}^t_i + \sca{y_j-y_i}{(1-t)x^0+t x^1}.
\end{equation}
This shows that $(1-t)x^0 + t x^1 \in V_i(\bm{\psi}^t)$ (note that we use the convexity of $\X$ here). Thus, 
\begin{equation}
(1-t) V_i(\bm{\psi}^0) + t V_i(\bm{\psi}^1) \subseteq V_i(\bm{\psi}^t). 
\end{equation}
Taking the Lebesgue measure on both sides and applying Brunn-Minkowski's inequality gives
\begin{equation}
\begin{aligned}
    G_i(\bm{\psi}^t)^{1/d} =  \rho(V_i(\bm{\psi}^t))^{1/d} 
    &\geq  \rho((1-t) V_i(\bm{\psi}^0) + t V_i(\bm{\psi}^1))^{1/d} \\
    &\geq (1-t) \rho(V_i(\bm{\psi}^0))^{1/d} + t \rho(V_i(\bm{\psi}^1))^{1/d}  \\
    &\geq (1-t)G_i(\bm{\psi}^0)^{1/d} + t G_i(\bm{\psi}^1))^{1/d}
\end{aligned}
\end{equation}
This inequality directly implies 
\begin{equation}
    \begin{aligned}
    &G_i(\bm{\psi}^t) \geq \min(G_i(\bm{\psi}^0), G_i(\bm{\psi}^1)),\\
&\hbox{i.e.} \min(G_i(\bm{\psi}^t), G_i(\bm{\psi}^0)) \geq \min(G_i(\bm{\psi}^0),G_i(\bm{\psi}^1)).
    \end{aligned}
\end{equation} 
Using the following equivalent formulation of the TV distance between probability measures we get \eqref{eq:lemma:tv2}:
\begin{equation}
\begin{aligned} 
\frac{1}{2} \nr{G(\bm{\psi}^t) - G(\bm{\psi}^0)}_1 &= 1 - \sum_i \min(G_i(\bm{\psi}^t), G_i(\bm{\psi}^0))  \\
&\leq 1 - \sum_i \min(G_i(\bm{\psi}^0), G_i(\bm{\psi}^1)) = \frac{1}{2} \nr{G(\bm{\psi}^t) - G(\bm{\psi}^0)}_1. 
\end{aligned}
\end{equation}
To prove \eqref{eq:lemma:tv3}, we first remark that by \eqref{eq:lemma:tv1},
\begin{equation}
    \begin{aligned}
    &G_i(\bm{\psi}^t) \geq (1-t)^d G_i(\bm{\psi}^0), \\
    &\hbox{i.e.} \min(G_i(\bm{\psi}^t), G_i(\bm{\psi}^0)) \geq (1-t)^d G_i(\bm{\psi}^0).
    \end{aligned}
\end{equation}
We conclude using the same formula as above:
\begin{equation}
\begin{aligned} 
\frac{1}{2} \nr{G(\bm{\psi}^t) - G(\bm{\psi}^0)}_1 &= 1 - \sum_i \min(G_i(\bm{\psi}^t), G_i(\bm{\psi}^0))  \\
&\leq 1 - \sum_i (1-t)^d G_i(\bm{\psi}^0) = 1 - (1-t)^d. 
\end{aligned}
\end{equation}
\end{proof}

The next proposition gives an explicit lower bound on the smallest
non-zero eigenvalue of the opposite of Jacobian matrix of the map
$G$. Its proof follows from the stability analysis of finite volumes
discretization of elliptic PDEs, see Lemma~3.7 in
\cite{eymard2000finite}. We report this proof (with very minor
adaptations to our case) in the appendix.

\begin{proposition}[Discrete Poincaré-Wirtinger inequality] \label{prop:poincare}
Consider $\bm{\psi}\in S_+$ and $v\in \Rsp^N$.
Then,
\begin{equation}
\sca{v^2}{G(\bm{\psi})}_{\Rsp^N} - \sca{v}{G(\bm{\psi})}^2_{\Rsp^N} \leq - C \sca{\DD G(\bm{\psi}) v}{v}_{\Rsp^N}    
\end{equation} 
where $C = C(d) \diam(\Y) \diam(\X)^{d+1} > 0$.
\end{proposition}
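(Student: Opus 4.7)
The plan is to recast both sides of the inequality in terms of the piecewise-constant extension $\tilde v : \X \to \Rsp$ of $v$, defined by $\tilde v(x) = v_i$ for $x \in V_i(\bm{\psi})$, and then to invoke a discrete Poincaré--Wirtinger inequality on the Laguerre tessellation in the spirit of Lemma~3.7 of \cite{eymard2000finite}.

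Since $\rho$ is Lebesgue with unit volume on $\X$ and $G_i(\bm{\psi}) = \vol(V_i(\bm{\psi}))$, the left-hand side is the $\L^2(\rho)$-variance of $\tilde v$, with integral representation
\[
\sca{v^2}{G(\bm{\psi})}_{\Rsp^N} - \sca{v}{G(\bm{\psi})}^2_{\Rsp^N} = \tfrac{1}{2} \iint_{\X \times \X} (\tilde v(x) - \tilde v(y))^2 \, \dd x\, \dd y.
\]
On the other hand, a direct computation from \eqref{eq:DG} using the symmetry of the off-diagonal entries gives
\[
-\sca{\DD G(\bm{\psi}) v}{v}_{\Rsp^N} = \sum_{i<j} \frac{\vol^{d-1}(V_i(\bm{\psi}) \cap V_j(\bm{\psi}))}{\|y_i - y_j\|} (v_i - v_j)^2,
\]
so the inequality I need to prove is a Poincaré--Wirtinger inequality comparing the $\L^2$-variance of $\tilde v$ to a weighted jump energy across the interfaces of the Laguerre mosaic.

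The geometric heart of the argument goes as follows. For any $(x, y) \in \X \times \X$, convexity of $\X$ ensures that $[x, y] \subseteq \X$, and the segment crosses a finite sequence of Laguerre interfaces $\sigma_k = V_{i_k} \cap V_{i_{k+1}}$ with unit normals $n_k$ collinear with $y_{i_{k+1}} - y_{i_k}$; the jumps of $\tilde v$ across these interfaces telescope to $\tilde v(y) - \tilde v(x)$. A weighted Cauchy--Schwarz with weights proportional to $\|y_{i_k} - y_{i_{k+1}}\|\,|\sca{n_k}{e_{xy}}|$, where $e_{xy} := (y-x)/\|y-x\|$, bounds $(\tilde v(x) - \tilde v(y))^2$ by the product of a length prefactor (at most $\diam(\Y)\,\|y-x\|$) and a sum of squared jumps weighted by $\|y_{i_k} - y_{i_{k+1}}\|^{-1} |\sca{n_k}{e_{xy}}|^{-1}$. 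Integrating over $\X \times \X$ via Fubini and the co-area formula, the angular Jacobian cancels the $|\sca{n_{ij}}{e_{xy}}|^{-1}$ factor, and the measure of pairs whose segment crosses a given interface $\sigma_{ij}$ is controlled by $\vol^{d-1}(\sigma_{ij}) \cdot \diam(\X)^{d+1}$ up to a dimensional constant; summing over interfaces yields the claimed bound.

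The main technical obstacle lies in the calibration of the weighted Cauchy--Schwarz and the subsequent Fubini/co-area step: the weights must be tuned so that the target denominator $\|y_i - y_j\|$ of the Dirichlet energy emerges with the correct polynomial dependence $\diam(\X)^{d+1}$ on the domain. This is exactly the content of Lemma~3.7 of \cite{eymard2000finite}, and the only adaptation to the Laguerre setting is to use $\|y_i - y_j\| \leq \diam(\Y)$ for $y_i, y_j \in \Y$ in place of a generic mesh spacing, which accounts for the additional factor $\diam(\Y)$ in the final constant.
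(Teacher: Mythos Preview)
Your proposal is correct and follows essentially the same approach as the paper: both recast the left-hand side as the $\L^2$-variance of the piecewise-constant extension via the double-integral identity, telescope along segments crossing Laguerre interfaces, apply weighted Cauchy--Schwarz with weights $d_{ij}\,|\sca{n_{ij}}{e_{xy}}|$, and then integrate using the change of variables $z=y-x$ so that the angular factor cancels, all exactly in the spirit of Lemma~3.7 of \cite{eymard2000finite}. One small calibration to watch when you write it out: the prefactor $\sum_k d_{i_k i_{k+1}}\,|\sca{n_k}{e_{xy}}|$ telescopes to at most $\diam(\Y)$ (not $\diam(\Y)\,\|y-x\|$); the extra $\|y-x\|$ factor needed for the $\diam(\X)^{d+1}$ dependence actually comes from the bound $\int_\X \chi_{ij}(x,x+z)\,\dd x \le m_{ij}\,\|z\|\,c_{ij,z}$ in the Fubini step.
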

\begin{remark}
  In particular, $-\DD G(\bm{\psi})$ is semidefinite positive, since its
  smallest non-zero eigenvalue is greater than a variance. This can
  also be seen from the definition of $\DD G(\bm{\psi})$ as a Laplacian
  matrix, or simply from Gershgorin's circle theorem and the explicit
  formula for $\DD G(\bm{\psi})$ recalled in \eqref{eq:DG}.
\end{remark} 

With these two results at hand, we show $L^2$ stability of the dual
potentials in the semi-discrete case. 

\begin{proof}[Proof of Theorem~\ref{th:l2-potential-tv}]
In this proof,  $A\lesssim B$ means that $A \leq C B$ for a constant $C$  depending only on  $d$, the diameters of $\X$ and $\Y$, $M_\X$ and $M_\Y$.
Denote $\bm{\psi}^t = (1-t) \bm{\psi}^0 + t\bm{\psi}^1$ and $v = \bm{\psi}^1 - \bm{\psi}^0$. By Taylor's formula, 
\begin{equation} \label{eq:taylor}
  \sca{G(\bm{\psi}^1) - G(\bm{\psi}^0)}{v}_{\Rsp^N} = \int_0^1 \sca{\DD G(\bm{\psi}^t) v}{v}_{\Rsp^N} \dd t
\end{equation}
Moreover, Proposition~\ref{prop:poincare} gives
\begin{equation} \label{eq:poincare} \sca{v^2}{G(\bm{\psi}^t)}_{\Rsp^N} - \sca{v}{G(\bm{\psi}^t)}^2_{\Rsp^N} \lesssim
  - \sca{\DD G(\bm{\psi}^t) v}{v}_{\Rsp^N} 
\end{equation}
Let us restrict to $t \in [0,\frac{1}{4}]$. Then, by Eq.~\eqref{eq:lemma:tv1}, one has 
\begin{equation} 
  G_i(\bm{\psi}^t) \geq (1-t)^{d} G_i(\bm{\psi}^0) \gtrsim G_i(\bm{\psi}^0),
\end{equation}
Thus, on the interval $t\in [0,\frac{1}{4}]$, 
\begin{equation} \label{eq:sqnorm}
  \sca{v^2}{G(\bm{\psi}^0)} \lesssim \sca{v^2}{G(\bm{\psi}^t)}.
  \end{equation}
On the other hand, using the assumption \eqref{eq:th:zeromean}, we get
\begin{align}
  \sca{v}{G(\bm{\psi}^t)}_{\Rsp^N} &= \sca{\bm{\psi}^1 - \bm{\psi}^0}{G(\bm{\psi}^t)}_{\Rsp^N} \notag\\
  &= \sca{\bm{\psi}^1 - \bm{\psi}^0}{G(\bm{\psi}^t) - G(\bm{\psi}^0)}_{\Rsp^N} + \sca{\bm{\psi}^1 - \bm{\psi}^0}{G(\bm{\psi}^0)}_{\Rsp^N} \notag\\
  &= \sca{\bm{\psi}^1 - \bm{\psi}^0}{G(\bm{\psi}^t) - G(\bm{\psi}^0)}_{\Rsp^N} + \sca{\bm{\psi}^1}{G(\bm{\psi}^0)-G(\bm{\psi}^1)}_{\Rsp^N}
\end{align}
thus implying
\begin{align}
  \abs{\sca{v}{G(\bm{\psi}^t)}} &\leq \nr{\bm{\psi}_1-\bm{\psi}_0}_\infty  \nr{G(\bm{\psi}^t) - G(\bm{\psi}^0)}_1 +
  \Lip(\bm{\psi}_1) \Wass_1(\mu_0,\mu_1)  \notag \\ 
&\lesssim \nr{G(\bm{\psi}^t) - G(\bm{\psi}^0)}_1 + \Wass_1(\mu_0,\mu_1) \label{eq:zm}
\end{align}
where we used Kantorovich-Rubinstein's theorem to get the first
inequality and that $\nr{\bm{\psi}_1-\bm{\psi}_0}_\infty$ is bounded by a
constant depending on $\X$ and $\Y$, as in \eqref{eq:linfty}.
Using Kantorovich-Rubinstein's theorem again, we also get
\begin{equation} \label{eq:kragain}
\begin{aligned}
\abs{\sca{G(\bm{\psi}^1) - G(\bm{\psi}^0)}{v}} 
&\lesssim \Wass_1(\mu^0, \mu^1) \\
&\lesssim \nr{\mu^0 - \mu^1}_\TV.
\end{aligned}
\end{equation}
Proposition~\ref{prop:poincare} implies that $\sca{\DD G(\bm{\psi}^t)v}{v} \leq 0$ for all $t\in [0,1]$. Combining \eqref{eq:taylor},\eqref{eq:poincare},\eqref{eq:sqnorm}, \eqref{eq:zm} and \eqref{eq:kragain} gives us
\begin{equation} \label{eq:stab}
\begin{aligned}
&T \sca{v^2}{G(\bm{\psi}^0)}   
\lesssim \Wass_1(\mu^0,\mu^1) + \int_0^T  (\nr{G(\bm{\psi}^t) - G(\bm{\psi}^0)}_1+\Wass_1(\mu_0,\mu_1))^2 \dd t
\end{aligned}
\end{equation}
To conclude the proof of the stability  with respect to the total
variation norm \eqref{eq:thsd:stab-tv}, we simply note that thanks to Lemma~\ref{lemma:tv} \eqref{eq:lemma:tv2}, we have $\nr{G(\bm{\psi}^t)-G(\bm{\psi}^0)}_1 \leq \nr{G(\bm{\psi}^1)-G(\bm{\psi}^0)}_1$. Combining
with the comparison $\Wass_1 \lesssim \nr{\cdot}_\TV$, \eqref{eq:stab} with 
$T = \frac{1}{4}$ yields
\begin{equation}
    \sca{v^2}{G(\bm{\psi}^0)} \lesssim \nr{\mu^1 - \mu^0}_\TV.
\end{equation}  
Note that thanks to symmetry, we get the same upper bound with $G(\bm{\psi}^0)$ replaced
by $G(\bm{\psi}^1)$. Summing these bounds thus concludes the proof of \eqref{eq:thsd:stab-tv}.

To get the second stability inequality \eqref{eq:stab-dual:w}, with
respect to the Wasserstein distance, we use
Lemma~\ref{lemma:tv}--\eqref{eq:lemma:tv3}, which gives for
$t\in[0,T],$
\begin{equation}
\nr{G(\bm{\psi}^t) -  G(\bm{\psi}^0)}_1 \leq 2(1 - (1 - t)^d) \lesssim T. 
\end{equation} 
Combining this inequality with Eq.~\eqref{eq:stab} we get for $T\leq \frac{1}{4}$,
\begin{equation}
     T \sca{v^2}{G(\bm{\psi}^0)} \lesssim \Wass_1(\mu^0,\mu^1) + T(T+\Wass_1(\mu_0,\mu_1))^2. 
\end{equation}
If $\Wass_1(\mu^0,\mu^1)^{\frac{1}{3}} \leq \frac{1}{4}$,
we can choose $T = \Wass_1(\mu^0,\mu^1)^{\frac{1}{3}}$ to obtain the desired inequality \eqref{eq:stab-dual:w}. On the other hand, if $\Wass_1(\mu^0,\mu^1)^{\frac{1}{3}} \geq \frac{1}{4}$,  taking $T = \frac{1}{4}$ gives us
 \begin{equation}
\sca{v^2}{G(\bm{\psi}^0)} 
\lesssim \Wass_1(\mu^0,\mu^1) 
= D \frac{\Wass_1(\mu,\nu)}{D} \leq D  \left(\frac{\Wass_1(\mu,\nu)}{D}\right)^{2/3}
\end{equation}
with $D := \max_{\mu,\nu\in\Prob(\Y)} \Wass_1(\mu,\nu) 
\leq \diam(\Y)$
thus also proving \eqref{eq:stab-dual:w} in that case.
\end{proof} 

\subsection{Proof of Theorem~\ref{th:map-holder-tv} (Stability of optimal transport maps)}
We need a result from \cite{chazal:hal-01425558}, providing an upper bounds the $L^2$ norm between gradients of  convex functions.
\begin{proposition}[\cite{chazal:hal-01425558} Theorem 22]\label{prop:chazal}
Let $f$ and $g$ be convex functions on a bounded convex set $\X$, then 
\begin{equation}
    \|\nabla f - \nabla g\|_{\L^2} \leq 2 C_{\X} \|f-g\|_\infty^{1/2} (\|\nabla f\|_\infty^{1/2} + \|\nabla g\|_\infty^{1/2})
\end{equation}where $C_{\X}$ depends only on $\X$.
\end{proposition}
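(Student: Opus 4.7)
My approach is to prove the inequality one coordinate at a time, reducing to a one-dimensional integration by parts along each line parallel to a coordinate axis, and exploiting the fact that the pure second derivatives of a convex function are non-negative distributional measures whose total mass on any line segment is controlled by the Lipschitz constant.

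Setting $h = f - g$, it suffices to bound $\int_\X (\partial_i h)^2 \, \d x$ for each coordinate $i = 1,\dots,d$ and then sum. Fixing $i = 1$ and applying Fubini, each slice $L(x') = \{t : (t,x') \in \X\}$ is an interval $[a(x'), b(x')]$ by convexity of $\X$. On this interval, a formal integration by parts yields
\begin{equation*}
\int_a^b (\partial_1 h)^2 \, \d t = \bigl[h \, \partial_1 h\bigr]_a^b - \int_a^b h \cdot \partial_1^2 h \, \d t.
\end{equation*}
The boundary term is bounded by $2\|h\|_\infty (\|\nabla f\|_\infty + \|\nabla g\|_\infty)$. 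For the interior term, since $f$ and $g$ are convex, $\partial_1^2 f$ and $\partial_1^2 g$ are non-negative measures on $[a,b]$ whose total mass equals $\partial_1 f(b^-) - \partial_1 f(a^+) \leq 2\|\nabla f\|_\infty$ (and similarly for $g$), so $\int_a^b |\partial_1^2 h| \, \d t \leq 2(\|\nabla f\|_\infty + \|\nabla g\|_\infty)$. Combining, the line-wise bound is $4 \|h\|_\infty (\|\nabla f\|_\infty + \|\nabla g\|_\infty)$, uniformly in $x'$.

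Integrating this uniform bound over the $(d-1)$-dimensional projection of $\X$ onto the hyperplane orthogonal to $e_i$, summing over $i$, and using $\sqrt{a+b} \leq \sqrt{a} + \sqrt{b}$ yields the claimed inequality with $C_{\X}$ of the form $\sqrt{\sum_i |\mathrm{proj}_{e_i^\perp}(\X)|}$, a geometric quantity depending only on $\X$. The main technical obstacle is that $f$ and $g$ are only convex (not $C^2$), so both the integration by parts and the pointwise boundary evaluation of $\partial_1 h$ require justification. I would address this by mollifying $f$ and $g$ with a non-negative radial kernel (which preserves convexity and does not increase $\|f\|_\infty$ or $\|\nabla f\|_\infty$), applying the clean smooth argument on a slightly shrunk convex subdomain, and passing to the limit using $\L^2$-convergence of the mollified gradients on compact subsets of the interior together with a boundary-layer estimate to recover the full integral over $\X$.
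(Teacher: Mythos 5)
The paper does not prove this proposition; it is imported verbatim as Theorem~22 of \cite{chazal:hal-01425558}, so there is no internal proof to compare against. Your argument is correct and is essentially the standard proof of this estimate, organized slightly differently: the usual presentation integrates by parts in all of $\X$ at once, writing $\int_\X |\nabla h|^2 = \int_{\partial\X} h\,\partial_n h - \int_\X h\,\Delta h$ and using that $\Delta f,\Delta g\ge 0$ are measures of total mass $\int_\X \Delta f = \int_{\partial\X}\partial_n f \le \|\nabla f\|_\infty\,\mathcal{H}^{d-1}(\partial\X)$, whereas you slice coordinate by coordinate and do the one-dimensional Stieltjes integration by parts on each segment. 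The slicing version is arguably cleaner since it sidesteps the divergence theorem for measure-valued Laplacians on a merely convex (hence only Lipschitz-boundary) domain, at the cost of a Fubini measurability check that is harmless here. The quantitative bookkeeping closes: each slice contributes at most $4\|h\|_\infty(\|\nabla f\|_\infty+\|\nabla g\|_\infty)$, integrating over the projections and summing over $i$ gives $\|\nabla h\|_{\L^2}^2\le 4C_\X^2\|h\|_\infty(\|\nabla f\|_\infty+\|\nabla g\|_\infty)$ with $C_\X^2=\sum_i|\mathrm{proj}_{e_i^\perp}(\X)|$, and $(a+b)^{1/2}\le a^{1/2}+b^{1/2}$ yields exactly the stated form. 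The two points genuinely needing justification --- the integration by parts when $\partial_1 f$ is only monotone/BV along the line, and the pointwise one-sided boundary values of $\partial_1 h$ --- are both handled correctly by your mollification-plus-shrunk-domain limit, using that convolution with a probability kernel preserves convexity, does not increase $\|f-g\|_\infty$ or the Lipschitz constants, and that the boundary layer contributes $O(\delta)$ to the left-hand side because the gradients are uniformly bounded.
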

The stability of potentials (Theorem \ref{th:l2-potential-tv}) implies that 
\begin{equation}\label{eq:l2}
\begin{aligned}
    &\|\psi^0 - \psi^1\|_{\L^2(\mu^0+\mu^1)}^2 \lesssim \eps \\
    &\hbox{ with } \eps = \|\mu^0 - \mu^1\|_\TV \hbox{ or } \eps = \Wass_1(\mu^0,\mu^1)^{\frac{2}{3}}.
\end{aligned}
\end{equation}
In practice, these $L^2$ estimates are not sufficient to conclude, and we need to translate them into a $L^\infty$ estimate in order to apply Proposition~\ref{prop:chazal}. For this purpose,
we consider  $\alpha\in(0,1)$, and we define
\begin{equation} \label{eq:Yalpha}
\Y_\alpha = \{ y \in \Y \mid \abs{\psi^0(y) - \psi^1(y)}\leq \eps^\alpha \}. 
\end{equation}
By Chebyshev's inequality, we deduce from \eqref{eq:l2} that  for $k\in\{0,1\},$
\begin{equation}
\eps^{2\alpha} \mu^k(\Y \setminus \Y_\alpha) \leq \nr{\psi^0 - \psi^1}^2_{\L^2(\mu^k)} \lesssim \eps, 
\end{equation}
which gives
\begin{equation} 1 - \mu^k(\Y_\alpha) \lesssim \eps^{1-2\alpha}. 
\end{equation}
We construct the Legendre
transform of the functions $\psi^k$ on the whole set $\Y$, and 
of the restrictions of $\psi^k$ to the set $\Y_\alpha$:
\begin{align}
    \label{eq:leg}
 \phi^k(x) &= \max_{y\in \Y} \sca{x}{y} - \psi^k(y),\\
 \label{eq:legalpha}
\phi^{k,\alpha}(x) &= \max_{y\in \Y_\alpha} \sca{x}{y} - \psi^k(y).    
\end{align}
Comparing Eqs. \eqref{eq:leg} and \eqref{eq:legalpha}, one sees that $\phi^{k,\alpha}\leq \phi^k$. 
Moreover, if $\nabla\phi^k(x) \in \Y_\alpha$, then using the Fenchel-Young (in)equality,
\begin{equation}
    \phi^k(x) + \psi^k(\nabla\psi^k(x)) = \sca{x}{\nabla \psi^k(x)}
\leq \phi^{k,\alpha}(x) + \psi^{k,\alpha}(\nabla \psi^k(x)),
\end{equation} so that $\phi^k(x) =
\phi^{k,\alpha}(x)$. In other words, $\phi^{k,\alpha} \equiv \phi^{k}$
on the set
\begin{equation}
  \X_\alpha = (\nabla \phi^k)^{-1}(\Y_\alpha).
\end{equation}
Note also that this set $\X_\alpha^k$ is "large", in the sense that
\begin{equation}
\begin{aligned}
1-\rho(\X_\alpha^k) &= 1-\rho((\nabla \phi^k)^{-1}(\Y_\alpha)) \\
&=  1-\mu^k(\Y_\alpha) \lesssim \eps^{1-2\alpha}, 
\end{aligned}
\end{equation}
where we used $\nabla \phi^k_{\#} \rho = \mu^k$.
As in \eqref{eq:lipschitz}, the gradients $\nabla \phi^{k,\alpha}$ and $\nabla \phi^{k}$ are
uniformly bounded by $M_\Y$ (by Eqs. \eqref{eq:leg} and \eqref{eq:legalpha}) and they coincide on the "large" set $\X_\alpha^k$. This directly implies that they are close in $L^2$ norm:
\begin{equation}\label{eq:estim1}
\begin{aligned}
\|\nabla \phi^{k,\alpha} - \nabla \phi^k\|_{\L^2(\X)} &= \|\nabla \phi^{k,\alpha} - \nabla \phi^k\|_{\L^2(\X\setminus \X^k_\alpha)}\\
&\quad \leq  (1 - \rho(\X^k_\alpha))(\|\nabla \phi^{k,\alpha}\|_\infty + \|\nabla \phi^k\|_\infty) 
\lesssim \eps^{1-2\alpha}. 
\end{aligned}
\end{equation}
On the other hand, by definition of $\Y_\alpha$ (see Eq. \eqref{eq:Yalpha}), the functions
$\psi^0$ and $\psi^1$ are uniformly close on  the set $\Y_\alpha.$ This implies that the  
Legendre transforms $\phi^{0,\alpha}$ and $\phi^{1,\alpha}$, defined in \eqref{eq:legalpha},
are also close. Indeed,
\begin{equation}
\begin{aligned}
    \phi^{0,\alpha}(x) &= \max_{y\in \Y^\alpha} \sca{x}{y} - \psi^0(x) \\
              &\leq \max_{y\in \Y^\alpha} \sca{x}{y} - \psi^1(x) + \eps^\alpha \\
              &= \phi^{1,\alpha}(x) + \eps^\alpha,
\end{aligned}
\end{equation}
thus giving by symmetry
\begin{equation}
 \|\phi^{1,\alpha} - \phi^{0,\alpha}\|_\infty \leq \eps^\alpha. 
\end{equation}
Combining this inequality with Proposition \ref{prop:chazal}, we obtain
\begin{equation} \label{eq:estim2}
\begin{aligned}
\|\nabla \phi^{1,\alpha} - \nabla \phi^{0,\alpha}\|_{\L^2(\X)}
\lesssim 2 (\|\nabla \phi^{0,\alpha}\|_\infty + \|\nabla \phi^{1,\alpha}\|_\infty)^{1/2}
\|\phi^{1,\alpha} - \phi^{0,\alpha}\|_\infty^{1/2} 
\lesssim \eps^{\frac{\alpha}{2}} 
\end{aligned}
\end{equation}
Using the triangle inequality and the two previous estimations \eqref{eq:estim1}--\eqref{eq:estim2}, we obtain
\begin{align}
&\|\nabla\phi^1 - \nabla\phi^0\|_{\L^2(\X)} \notag\\
&\leq \|\nabla\phi^1 - \nabla\phi^{1,\alpha}\|_{\L^2(\X)} + 
\|\nabla\phi^{1,\alpha} - \nabla\phi^{0,\alpha}\|_{\L^2(\X)} + 
\|\nabla \phi^{0,\alpha} - \nabla\phi^{0}\|_{\L^2(\X)} \notag\\
&\lesssim \eps^{1-2\alpha} + \eps^{\alpha/2}
\end{align}
The best exponent is obtained when $1-2\alpha = \alpha/2$ i.e. $1 = 5\alpha/2$, $\alpha = 2/5$, giving
\begin{equation}\|\nabla \phi^1 - \nabla\phi^0\|_{\L^2(\X)} \lesssim \eps^{\frac{1}{5}}, \end{equation}
which implies the desired estimates if one replaces
$\eps$ with the  possible values \eqref{eq:l2}.

\section{Experiments}

\begin{figure*}
    \centering
    \includegraphics[width=.32\textwidth]{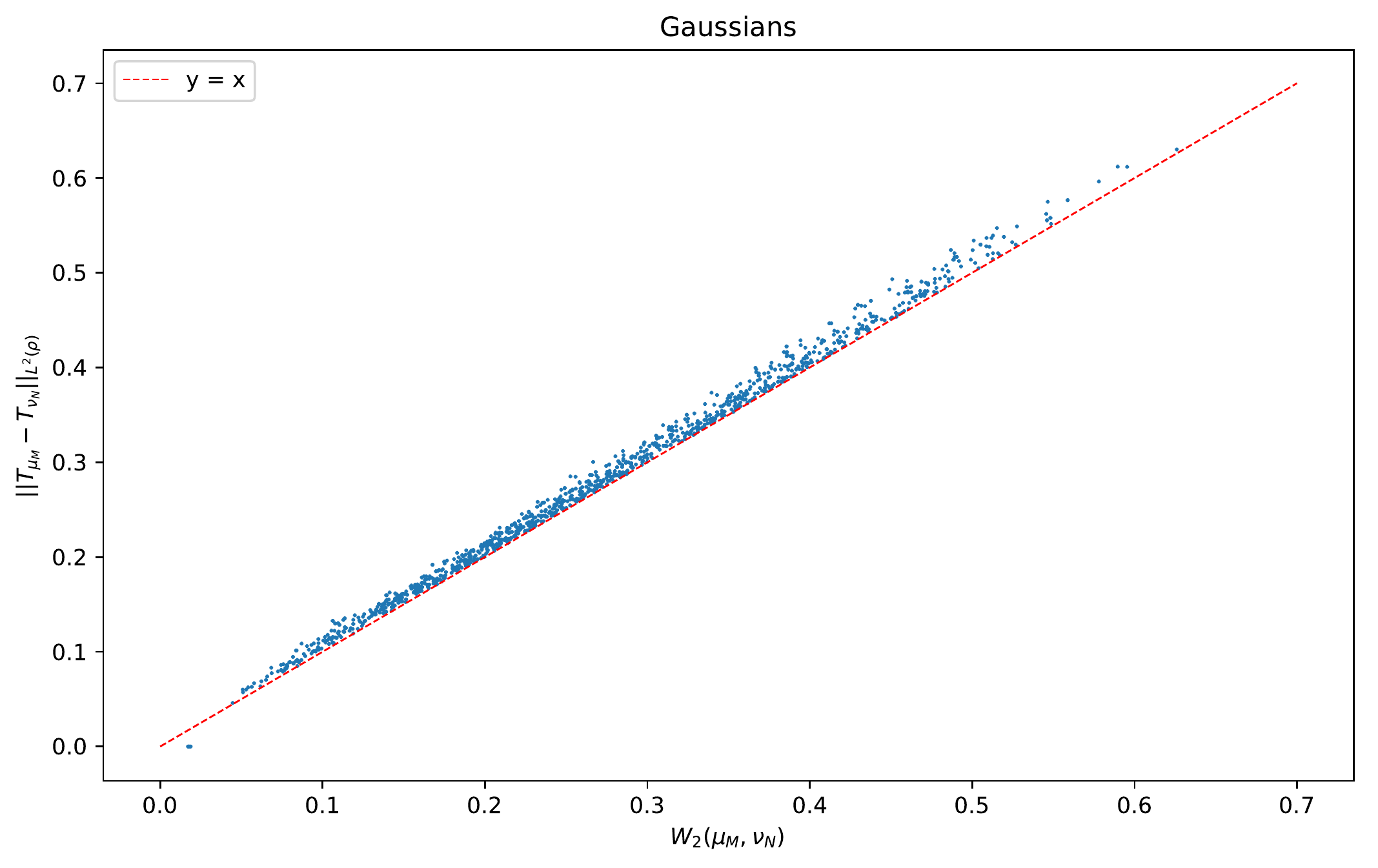}  \hfill
    \includegraphics[width=.32\textwidth]{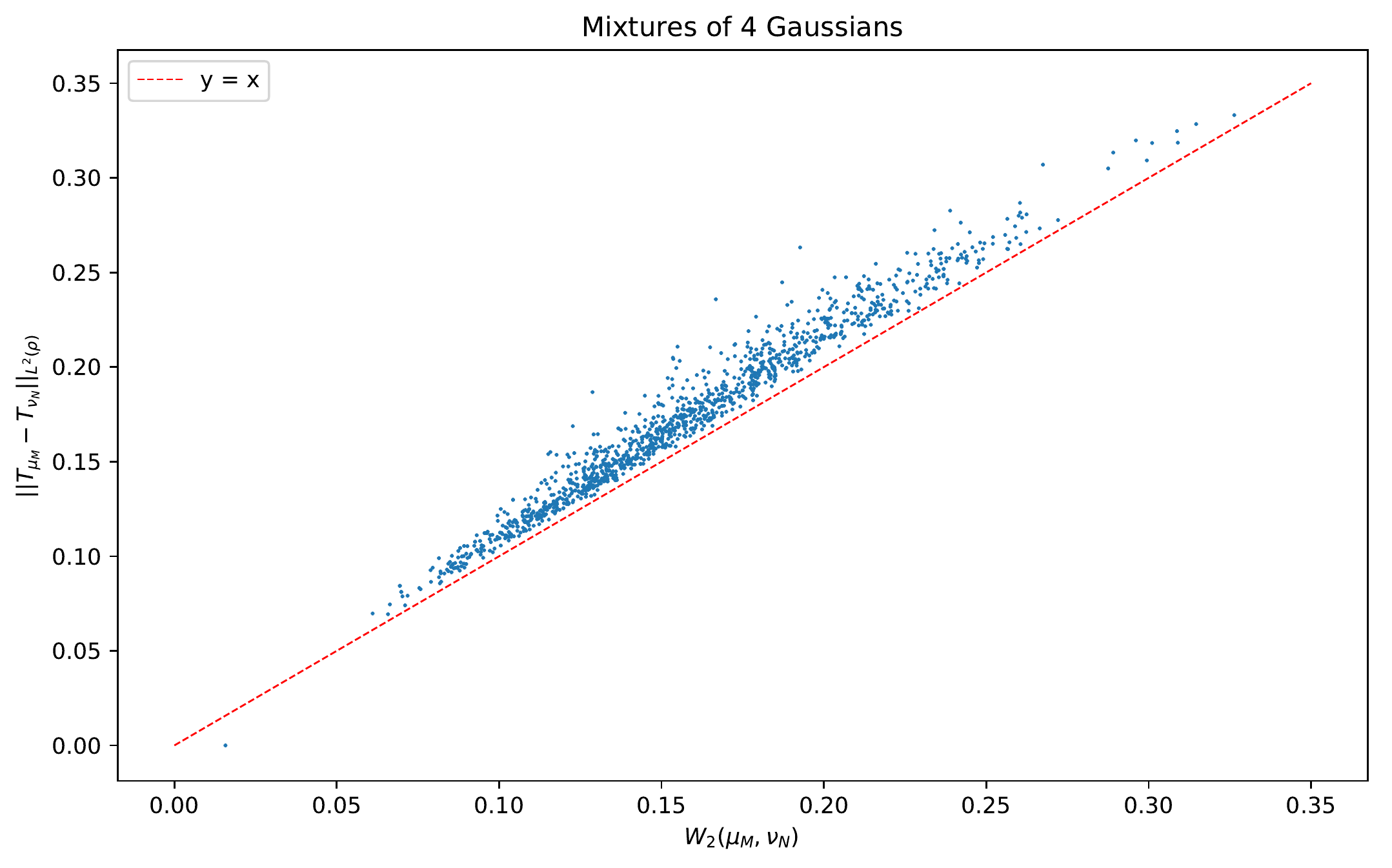} \hfill
    \includegraphics[width=.32\textwidth]{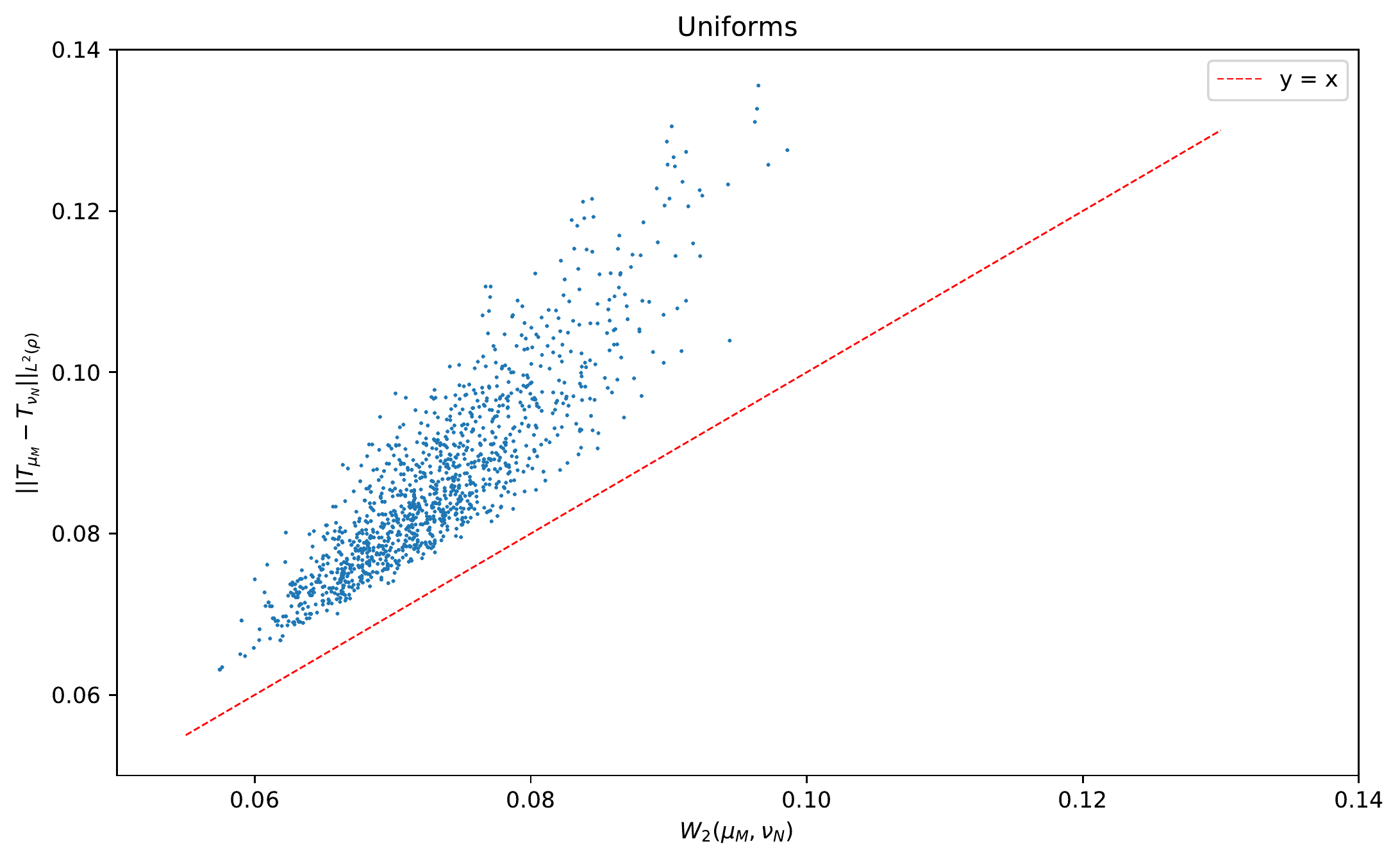}
    \caption{$W_{2,\rho}$ vs. $\Wass_2$ between point clouds sampled from Gaussian, Mixture of 4 Gaussian and Uniform distributions. $\Wass_2$ being approximated with entropic regularization, we may have $\Wass_2 \geq W_{2,\rho}$ on certain points.}
    \label{fig:distances}
\end{figure*}

\begin{figure*}
    \centering
    \includegraphics[width=.60\textwidth]{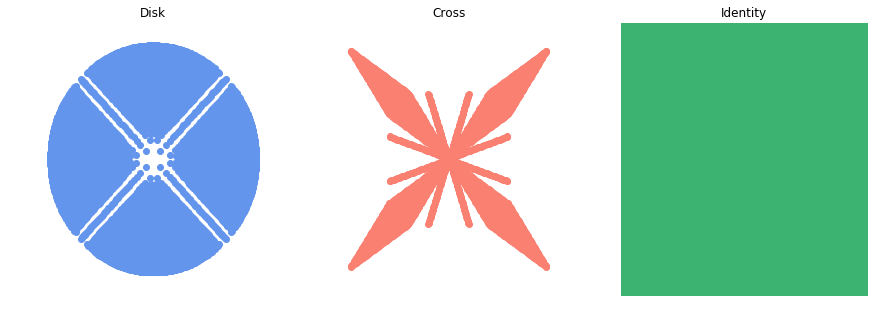}  
    \hspace{3px}
    \includegraphics[width=.35\textwidth]{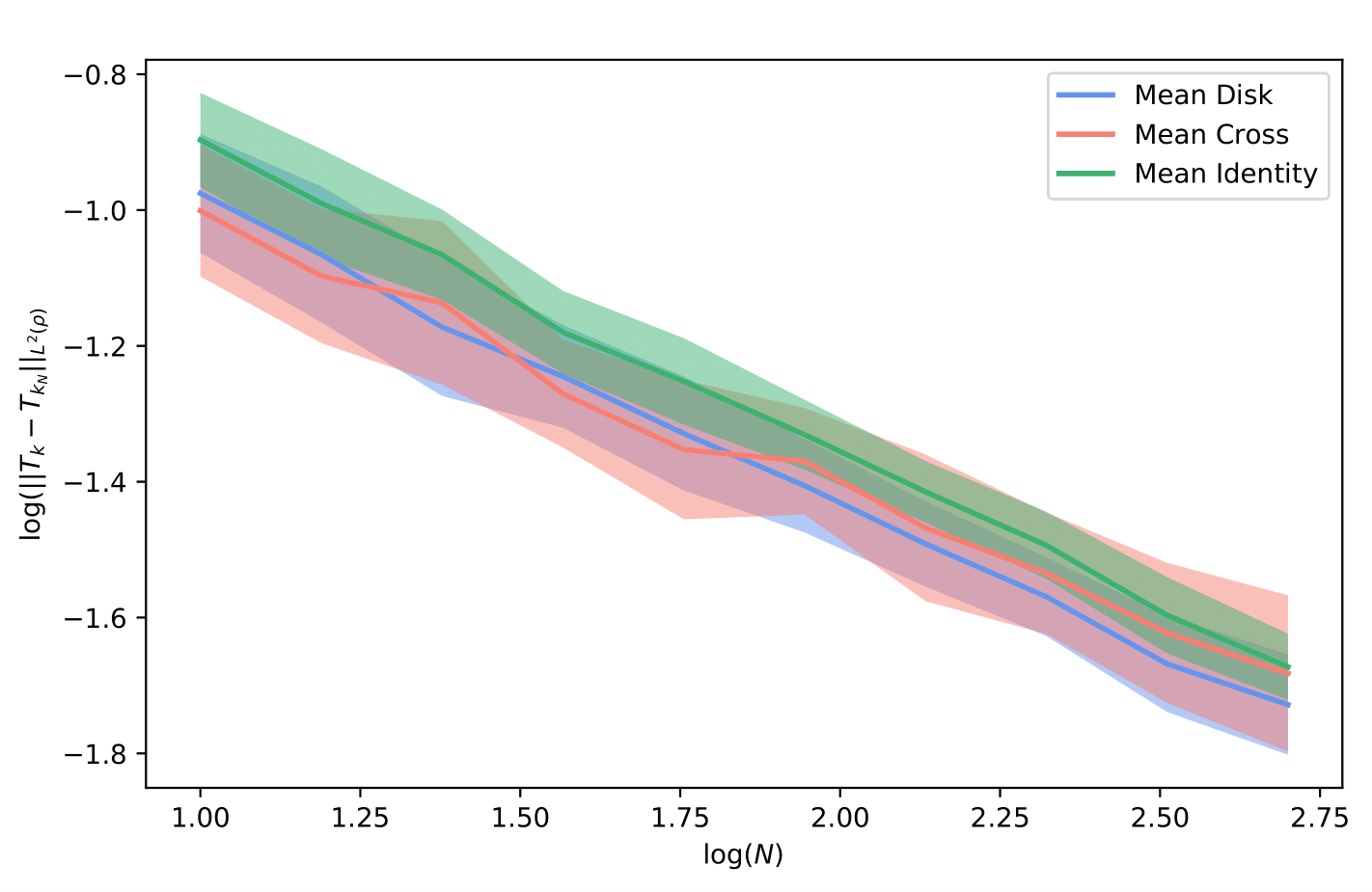}
    \caption{(Left) Target measures, push-forwards of the maps $T_k = \nabla \phi_k$ where $\phi_{\text{Disk}}(x, y) := 0.25(x+y) + 0.07(|x+y|^{3/2} + |x-y|^{3/2})$, $\phi_{\text{Cross}}(x, y) := 0.5(x+y) + 0.04 \max(4(x+y-1)^2 + 0.5(2x-1)^2 + 0.5(2y-1)^2, 4(x-y)^2 + 0.5(2x-1)^2 + 0.5*(2y-1)^2)$ and $\phi_{\text{Square}}(x, y) :=  0.5(x^2 + y^2)$ (Right) Sampling distance $\|T_\mu - T_{\mu_N}\|_{\L^2(\rho)}$.}
    \label{fig:sampling}
\end{figure*}

In this last section, we briefly illustrate the behaviour of the Monge
map embeddings and we mention potential use of these embeddings in
machine learning. In what follows, we consider that $d = 2$ and that
$\rho$ is the Lebesgue measure on the unit square $\X = [0,
  1]^2$. For simplicity, the discrete measures $\mu$ and $\nu$ are
also supported on $\Y = \X$, for which algorithms
readily give approximates of $\Wass_p(\mu, \nu)$ and of $T_\mu$ or
$T_\nu$. The Wasserstein distance $\Wass_p(\mu, \nu)$ is approximated
using Sinkhorn's algorithm \cite{Cuturi:2013} while $T_\mu$ and
$T_\nu$ are approximated with a damped Newton's algorithm
\cite{kitagawa:hal-01290496}.

\subsection{Vectorization of the Monge maps}
The Hilbert space $H=\L^2(\rho,\Rsp^2)$, which contains the Monge maps
$T_\mu,T_\nu$ is infinite dimensional. We therefore project the maps
on the finite dimensional subspace $H_m \subseteq H$ of piecewise constant maps, defined for any
$m\in \Nsp$ by 
\begin{equation}
  H_m = \{ T \in \L^2(\rho,\Rsp^2) \mid \forall s,t\in\{0,\hdots, m-1\}, \restr{T}_{\X_{s,t}} \equiv \hbox{ cst}\},
  \end{equation}
where $\X_{s,t} = [\frac{s}{m}, \frac{s+1}{m}) \times [\frac{t}{m}, \frac{t+1}{m})$. The orthogonal projection $\Pi_m: H \to H_m$ can be computed using
    \begin{equation}
      \restr{\Pi_m T}_{\X_{s,t}} = m^2 \int_{\X_{s,t}} T.
      \end{equation}
As the projection on a close subspace, the  mapping $\Pi_m$ is
$1$-Lipschitz. This implies that the vectorized Monge embedding $\mu \mapsto \Pi_m\circ T_\mu$ satisfies the same Hölder-continuity results as the Monge embedding since
\begin{equation}
  \|\Pi_m T_\mu - \Pi_m T_\nu\|_{\L^2(\rho)} \leq \|T_\mu - T_\nu\|_{\L^2(\rho)}.
  \end{equation}
In practice,  $\Pi_m T_\mu$ is represented by the $m^2d$-dimensional vector
\begin{equation}
  \bm{T_\mu} := \left(\int_{\X_{s,t}} T_\mu \dd\rho\right)_{1\leq s, t \leq m}.
\end{equation}

\subsection{Distance approximation}
We first compare $W_{2,\rho}(\mu, \nu) = \|T_\mu - T_\nu\|_{\L^2(\rho)}$ against $\Wass_2(\mu, \nu)$ in specific settings to illustrate Equation (\ref{eq:biHolder-dist_W2rho}). We consider three different settings corresponding to three different families of distributions. In each setting, $50$ point clouds of $300$ points are sampled, each from a random distribution that belongs to the given family, and pairwise $\Wass_2$ and $W_{2,\rho}$ distances on the $50$ point clouds are computed. $\Wass_2$ is approximated with Sinkhorn's algorithm while the transport maps $T_\mu$ are approximated using \cite{kitagawa:hal-01290496}. The distances $\|T_\mu - T_\nu\|_{\L^2(\rho)}$ are approximated with $ \|\bm{T_\mu} - \bm{T_\nu}\|_2$ with $m = 200$.

The three families of distributions we consider are: Gaussian, Mixture of $4$ Gaussians and Uniform. Note that for each point cloud sampling in the two first settings the parameters of the sampled distribution are selected randomly. We report in Figure \ref{fig:distances} the comparisons between $W_{2,\rho}$ and $\Wass_2$. We observe that $W_{2,\rho}$ behaves like $\Wass_2$ when the target measure are concentrated (Gaussian and Mixture of Gaussians distributions) and that this proximity of the two distances fades when the target measures have less concentrated or are drawn from the same distribution (Uniform).

\subsection{Sampling approximation}

In practice, the population distribution $\mu$ is often unknown and one can only access to samples $(x_i)_{i=1, \dots, N}$ from this distribution, yielding the empirical distribution $\mu_N = \frac{1}{N} \sum_{i=1}^N \delta_{x_i}$. One can thus wonder how well $T_{\mu_N}$ represents $T_{\mu}$ in function of the number of samples $N$. We illustrate the sampling approximation of $T_{\mu_N}$ by observing the quantity $\|T_\mu - T_{\mu_N}\|_{\L^2(\rho)}$ as a function of $N$ in again $3$ different settings where the "ground truth map" $T_\mu$ is prescribed. The $3$ maps are chosen as gradients of convex functions and transport the unit square to measures resembling a disk, a cross and a square (Figure \ref{fig:sampling}). For the different values of $N$ the experiments are repeated $25$ times and the standard deviations define the shaded areas surrounding the curves. We can observe a slightly better sampling behavior for the identity map (defining the square measure), which might be due to the regularity of the transport map.

In a more statistical context, we observe in Figure \ref{fig:sampling-2} the same quantities when the target measures are a Gaussian, a Mixture of $4$ Gaussians and the uniform distribution on $\X$. Since the "ground truth" maps $T_\mu$ are unknown in these case, we approximate them with the map $T_{\mu_M}$ for $M=10000$. Again, the measures that have the most concentrated support seem to have a better sampling behavior.

\begin{figure*}   
    \centering
    \includegraphics[width=.20\textwidth]{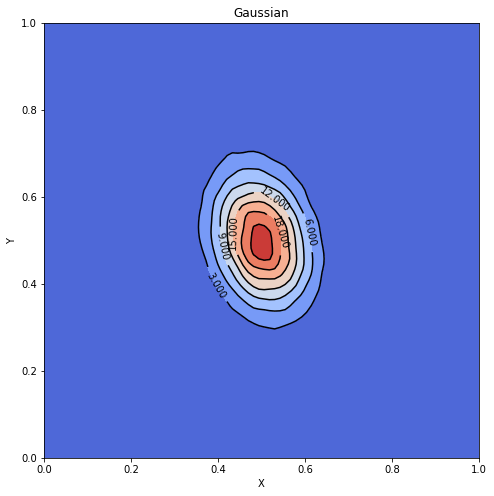}  
    \hspace{3px}
    \includegraphics[width=.20\textwidth]{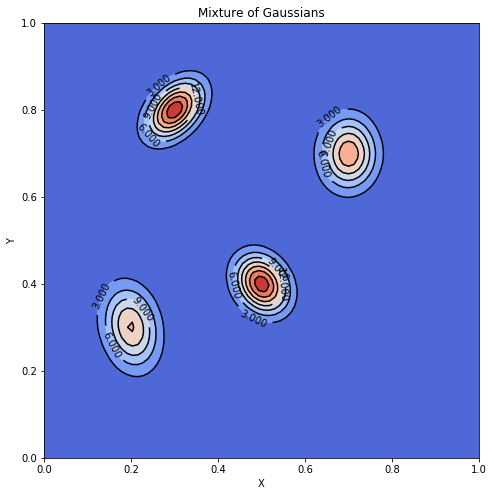}
    \hspace{3px}
    \includegraphics[width=.20\textwidth]{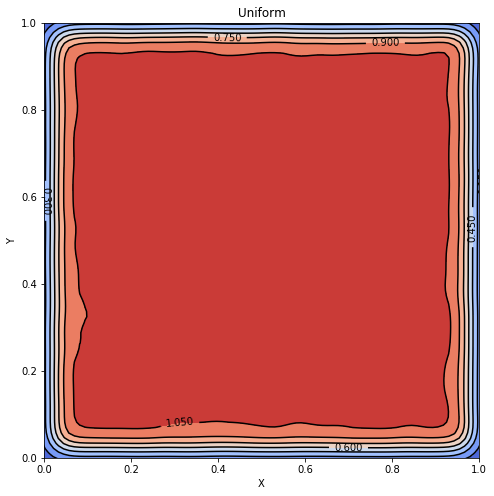}
    \hspace{3px}
    \includegraphics[width=.30\textwidth]{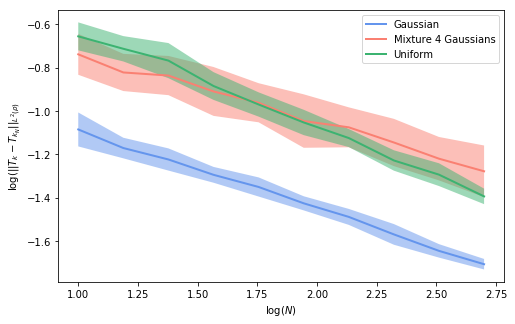}
    \caption{From left to right: densities of the sampled Gaussian, Mixture of $4$ Gaussians and Uniform distributions and sampling distance $\|T_\mu - T_{\mu_N}\|_{\L^2(\rho)}$ as a function of $N$}
    \label{fig:sampling-2}
\end{figure*}

\begin{figure}   
    \centering
    \includegraphics[width=.5\textwidth]{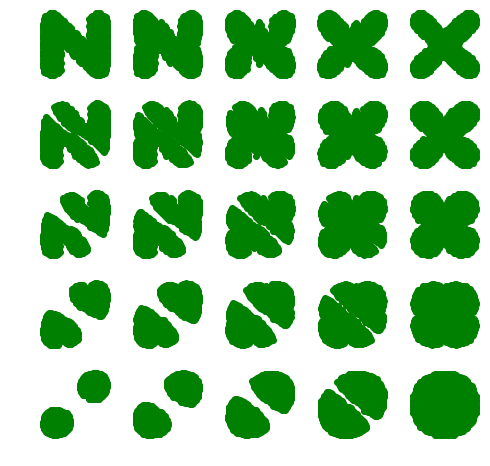}  
    \caption{Barycenters of $4$ point clouds. Weights $(\lambda_s)_s$ are bilinear w.r.t the corners of the square.}
    \label{fig:barycenter}
\end{figure}

\subsection{Barycenter approximation and clustering}
Computing means and barycenters is often necessary in unsupervised learning contexts. For point cloud data, the Wasserstein distance is a natural choice to define such barycenters. For $(\mu_s)_{s=1, \dots, S}$ $S$ discrete probability measures (corresponding to $S$ point clouds), the barycenter of $(\mu_s)$ with non-negative weights $(\lambda_s)_{s=1, \dots, S}$ is the solution of the following minimization problem:
\begin{equation}
\min_{\mu} \sum_{s=1}^S \lambda_s \Wass_2^2(\cdot, \mu_s).    
\end{equation} 
This  problem does not have an explicit solution, and an optimization algorithm must be run  every time the weights are changed. Using transport maps from a reference measure $\rho$, it is natural to consider instead 
\begin{equation}
\mu = \left(\sum_{s=1}^S \lambda_s T_{\mu_s}\right)_{\#} \rho    
\end{equation} 
as the barycenter of the $(\mu_s)$, and one can indeed check that the measure $\mu$ defined
by this formula minimizes
$\sum_{s} \lambda_s W^2_{2,\rho}(\cdot,\mu_s).$  We illustrate this idea with
the computation of barycenters of $4$ point clouds in Figure \ref{fig:barycenter}. Again,
in practice operations are performed on the vectorized Monge maps $\bm{T_\mu}$.

These barycenters are in general not equal to their Wasserstein counterparts but they seem to retain  the geometric information contained in the point clouds. This idea can be used to extend  unsupervised learning algorithms such as $k$-Means to family of point clouds. As a toy example, we perform a clustering on the images of the MNIST dataset \cite{lecun-mnisthandwrittendigit-2010}. We convert the $60,000$ images of the training set into point clouds of $\X = [0,1]^2$ using a simple thresholding on the pixels intensity and we compute for each point cloud its Monge map embedding. We then perform a clustering with the $k$-means++ algorithm \cite{Arthur:2007:KAC:1283383.1283494} on the vectorized Monge maps, looking for $k = 20$ clusters. Figure \ref{fig:kmeans} shows the push-forwards of the $20$ centroids in $L^2(\rho,\Rsp^d)$. 
\begin{figure}
    \centering
    \includegraphics[width=.8\textwidth]{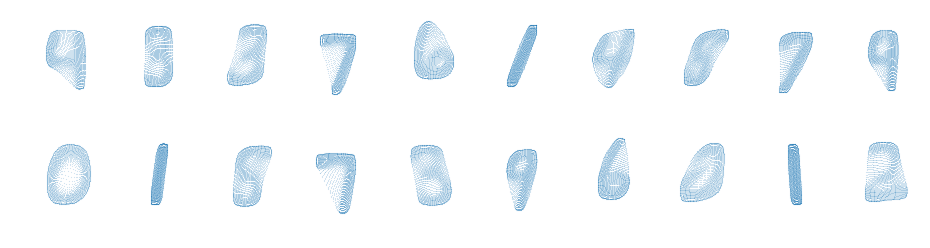}  
    \caption{Push-forwards of the $20$ centroids after clustering of the Monge map embeddings of the MNIST training set.}
    \label{fig:kmeans}
\end{figure}

\section{CONCLUSION}
We have shown that measures can readily be embedded explicitly in a Hilbert space by their optimal transport map between an arbitrary reference measure and themselves.
These embeddings are shown to be injective and bi-Hölder continuous w.r.t the Wasserstein  distance. They enable the definition of distances between measures and the use of generic machine learning algorithms in a computationally tractable framework.

Future work will focus on the extension of the stability theorem to more general sources and costs, to the improvement of the Hölder exponent and to statistical properties of transport plans, including concentration bounds and sample complexity of the distance they define.





\subsection*{Acknowledgement} 
The first author warmly thank Clément Cancès for pointing Lemma~3.7 in
\cite{eymard2000finite} and Robert Berman for discussions related to
the topic of this article, and acknowledges the support of the Agence national de la
recherche through the project MAGA (ANR-16-CE40-0014).

\bibliographystyle{plain}
\bibliography{ref}

\begin{thebibliography}{10}

\bibitem{ML_1}
Jean Alaux, Edouard Grave, Marco Cuturi, and Armand Joulin.
\newblock Unsupervised hyperalignment for multilingual word embeddings.
\newblock {\em CoRR}, abs/1811.01124, 2018.

\bibitem{ambrosio2008gradient}
Luigi Ambrosio, Nicola Gigli, and Giuseppe Savar{\'e}.
\newblock {\em Gradient flows: in metric spaces and in the space of probability
  measures}.
\newblock Springer Science \& Business Media, 2008.

\bibitem{ML_2}
Martin Arjovsky, Soumith Chintala, and L{\'e}on Bottou.
\newblock {W}asserstein generative adversarial networks.
\newblock In Doina Precup and Yee~Whye Teh, editors, {\em Proceedings of the
  34th International Conference on Machine Learning}, volume~70 of {\em
  Proceedings of Machine Learning Research}, pages 214--223, International
  Convention Centre, Sydney, Australia, 06--11 Aug 2017. PMLR.

\bibitem{Arthur:2007:KAC:1283383.1283494}
David Arthur and Sergei Vassilvitskii.
\newblock k-means++: The advantages of careful seeding.
\newblock In {\em Proceedings of the eighteenth annual ACM-SIAM Symposium on
  Discrete Algorithms}, pages 1027--1035, 2007.

\bibitem{aurenhammer1998minkowski}
Franz Aurenhammer, Friedrich Hoffmann, and Boris Aronov.
\newblock Minkowski-type theorems and least-squares clustering.
\newblock {\em Algorithmica}, 20(1):61--76, 1998.

\bibitem{benamou2016discretization}
Jean-David Benamou, Guillaume Carlier, Quentin M{\'e}rigot, and Edouard Oudet.
\newblock Discretization of functionals involving the monge--amp{\`e}re
  operator.
\newblock {\em Numerische mathematik}, 134(3):611--636, 2016.

\bibitem{Berman2018ConvergenceRF}
Robert~J. Berman.
\newblock Convergence rates for discretized monge-amp\`ere equations and
  quantitative stability of optimal transport.
\newblock arXiv preprint 1803.00785, 2018.

\bibitem{stat1}
J{\'e}r{\'e}mie Bigot, Elsa Cazelles, and Nicolas Papadakis.
\newblock {Central limit theorems for entropy-regularized optimal transport on
  finite spaces and statistical applications}.
\newblock working paper or preprint, February 2019.

\bibitem{Brenier}
Yann Brenier.
\newblock Polar factorization and monotone rearrangement of vector-valued
  functions.
\newblock {\em Communications on Pure and Applied Mathematics}, 44(4):375--417,
  1991.

\bibitem{caffarelli1999problem}
Luis~A Caffarelli, Sergey~A Kochengin, and Vladimir~I Oliker.
\newblock Problem of reflector design with given far-field scattering data.
\newblock {\em Monge Amp{\`e}re equation: applications to geometry and
  optimization}, 226:13, 1999.

\bibitem{ML_3}
Guillermo Canas and Lorenzo Rosasco.
\newblock Learning probability measures with respect to optimal transport
  metrics.
\newblock {\em Advances in Neural Information Processing Systems}, 4, 09 2012.

\bibitem{stat2}
Elsa Cazelles, Vivien Seguy, Jérémie Bigot, Marco Cuturi, and Nicolas
  Papadakis.
\newblock Log-pca versus geodesic pca of histograms in the wasserstein space.
\newblock {\em SIAM Journal on Scientific Computing}, 40, 08 2017.

\bibitem{chazal:hal-01425558}
Fr{\'e}d{\'e}ric Chazal, David Cohen-Steiner, Andr{\'e} Lieutier, Quentin
  M{\'e}rigot, and Boris Thibert.
\newblock {Inference of curvature using tubular neighborhoods}.
\newblock In {\em {Modern Approaches to Discrete Curvature}}, pages 133--158.
  {Springer}, 2017.

\bibitem{chernozhukov2017}
Victor Chernozhukov, Alfred Galichon, Marc Hallin, and Marc Henry.
\newblock Monge–{K}antorovich depth, quantiles, ranks and signs.
\newblock {\em Annals of Statistics}, 45(1):223--256, 2017.

\bibitem{cullen1991generalised}
MJP Cullen, J~Norbury, and RJ~Purser.
\newblock Generalised lagrangian solutions for atmospheric and oceanic flows.
\newblock {\em SIAM Journal on Applied Mathematics}, 51(1):20--31, 1991.

\bibitem{Cuturi:2013}
Marco Cuturi.
\newblock Sinkhorn distances: Lightspeed computation of optimal transport.
\newblock In {\em Proceedings of the 26th International Conference on Neural
  Information Processing Systems - Volume 2}, NIPS'13, pages 2292--2300, 2013.

\bibitem{de2012blue}
Fernando De~Goes, Katherine Breeden, Victor Ostromoukhov, and Mathieu Desbrun.
\newblock Blue noise through optimal transport.
\newblock {\em ACM Transactions on Graphics (TOG)}, 31(6):171, 2012.

\bibitem{eymard2000finite}
Robert Eymard, Thierry Gallou{\"e}t, and Rapha{\`e}le Herbin.
\newblock Finite volume methods.
\newblock {\em Handbook of numerical analysis}, 7:713--1018, 2000.

\bibitem{ML_5}
R{\'e}mi Flamary, Marco Cuturi, Nicolas Courty, and Alain Rakotomamonjy.
\newblock {Wasserstein Discriminant Analysis}.
\newblock {\em {Machine Learning}}, 107(12):1923--1945, December 2018.

\bibitem{fletcher2004principal}
Thomas~P. Fletcher, Conglin Lu, Stephen~M Pizer, and Sarang Joshi.
\newblock Principal geodesic analysis for the study of nonlinear statistics of
  shape.
\newblock {\em IEEE transactions on medical imaging}, 23(8):995--1005, 2004.

\bibitem{gangbo1996geometry}
Wilfrid Gangbo and Robert~J McCann.
\newblock The geometry of optimal transportation.
\newblock {\em Acta Mathematica}, 177(2):113--161, 1996.

\bibitem{genevay:hal-01321664}
Aude Genevay, Marco Cuturi, Gabriel Peyr{\'e}, and Francis Bach.
\newblock Stochastic optimization for large-scale optimal transport.
\newblock In {\em Advances in neural information processing systems}, pages
  3440--3448, 2016.

\bibitem{ML_6}
Aude Genevay, Gabriel Peyre, and Marco Cuturi.
\newblock Learning generative models with sinkhorn divergences.
\newblock In Amos Storkey and Fernando Perez-Cruz, editors, {\em Proceedings of
  the Twenty-First International Conference on Artificial Intelligence and
  Statistics}, volume~84 of {\em Proceedings of Machine Learning Research},
  pages 1608--1617, Playa Blanca, Lanzarote, Canary Islands, 09--11 Apr 2018.
  PMLR.

\bibitem{gigli2011holder}
Nicola Gigli.
\newblock On h{\"o}lder continuity-in-time of the optimal transport map towards
  measures along a curve.
\newblock {\em Proceedings of the Edinburgh Mathematical Society},
  54(2):401--409, 2011.

\bibitem{ML_4}
Paula Gordaliza, Eustasio~Del Barrio, Gamboa Fabrice, and Jean-Michel Loubes.
\newblock Obtaining fairness using optimal transport theory.
\newblock In Kamalika Chaudhuri and Ruslan Salakhutdinov, editors, {\em
  Proceedings of the 36th International Conference on Machine Learning},
  volume~97 of {\em Proceedings of Machine Learning Research}, pages
  2357--2365, Long Beach, California, USA, 09--15 Jun 2019. PMLR.

\bibitem{htter2019minimax}
Jan-Christian Hütter and Philippe Rigollet.
\newblock Minimax rates of estimation for smooth optimal transport maps.
\newblock arXiv preprint 1905.05828, 2019.

\bibitem{kitagawa:hal-01290496}
Jun Kitagawa, Quentin M{\'e}rigot, and Boris Thibert.
\newblock Convergence of a newton algorithm for semi-discrete optimal
  transport.
\newblock {\em Journal of the European Mathematical Society}, 2019.

\bibitem{lecun-mnisthandwrittendigit-2010}
Yann LeCun and Corinna Cortes.
\newblock {MNIST} handwritten digit database.
\newblock 2010.

\bibitem{levy2015numerical}
Bruno L{\'e}vy.
\newblock A numerical algorithm for l2 semi-discrete optimal transport in 3d.
\newblock {\em ESAIM: Mathematical Modelling and Numerical Analysis},
  49(6):1693--1715, 2015.

\bibitem{merigot2011multiscale}
Quentin M{\'e}rigot.
\newblock A multiscale approach to optimal transport.
\newblock In {\em Computer Graphics Forum}, volume~30, pages 1583--1592. Wiley
  Online Library, 2011.

\bibitem{nochetto2019pointwise}
Ricardo~H Nochetto and Wujun Zhang.
\newblock Pointwise rates of convergence for the oliker--prussner method for
  the monge--amp{\`e}re equation.
\newblock {\em Numerische Mathematik}, 141(1):253--288, 2019.

\bibitem{oliker1989numerical}
Vladimir~I Oliker and Laird~D Prussner.
\newblock On the numerical solution of the equation $\frac{\partial^2
  z}{\partial x^2}\frac{\partial^2 z}{\partial y^2} - \left(\frac{\partial^2
  z}{\partial x\partial y}\right)=f$ and its discretizations, i.
\newblock {\em Numerische Mathematik}, 54(3):271--293, 1989.

\bibitem{otto2001geometry}
Felix Otto.
\newblock The geometry of dissipative evolution equations: the porous medium
  equation.
\newblock {\em Communications in Partial Differential Equations}, 26:101--174,
  2001.

\bibitem{comp_OT}
Gabriel Peyré and Marco Cuturi.
\newblock Computational optimal transport.
\newblock {\em Foundations and Trends in Machine Learning}, 11(5-6):355--607,
  2019.

\bibitem{stat3}
Aaditya Ramdas, Nicolas Garcia, and Marco Cuturi.
\newblock On wasserstein two sample testing and related families of
  nonparametric tests.
\newblock {\em Entropy}, 19, 09 2015.

\bibitem{santambrogio2015optimal}
Filippo Santambrogio.
\newblock Optimal transport for applied mathematicians.
\newblock {\em Birk{\"a}user, NY}, 55:58--63, 2015.

\bibitem{villani2003}
C{\'e}dric Villani.
\newblock {\em Topics in optimal transportation}.
\newblock American Mathematical Soc., 2003.

\bibitem{LOT_ref_image}
Wei Wang, Dejan Slep\v{c}ev, Saurav Basu, John~A. Ozolek, and Gustavo~K. Rohde.
\newblock A linear optimal transportation framework for quantifying and
  visualizing variations in sets of images.
\newblock {\em Int. J. Comput. Vision}, 101(2):254--269, January 2013.

\bibitem{stat4}
Jonathan Weed and Quentin Berthet.
\newblock Estimation of smooth densities in wasserstein distance, 2019.

\end{thebibliography}

\appendix
\section{Proof}

\subsection*{Proof of Corollary~\ref{coro:Berman}}

We first state a simple lemma that links the uniform norm of Lipschitz function to its $L^2(\rho)$ norm:
\begin{lemma}  
\label{lemma:lip-l2}
If $f$ is $L$-Lipschitz on a convex bounded domain $\X$, then
\begin{equation} 
\|f\|_{\infty} \leq C \|f\|_{\L^2(\X)}^{\frac{2}{d+2}}
\end{equation}
for some $C$ depending on $L$, $d$ and $\X$ only.
\end{lemma}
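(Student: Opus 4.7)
The plan is a standard ``bump'' argument exploiting the Lipschitz regularity together with convexity of $\X$. Set $M := \|f\|_\infty$, which is attained at some $x_0 \in \X$ by compactness, and assume without loss of generality $f(x_0) = M$ (replacing $f$ by $-f$ otherwise). Using the $L$-Lipschitz hypothesis, for every $x \in \X$ with $\|x - x_0\| \leq M/(2L)$ we have $f(x) \geq M/2$, so in particular $|f(x)|^2 \geq M^2/4$ on the set $\X \cap B(x_0, M/(2L))$.

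The heart of the matter is then a geometric lemma: because $\X$ is a compact convex set with nonempty interior (the non-degenerate case; otherwise the statement is vacuous in dimension $d$), there exist constants $c_\X > 0$ and $r_\X > 0$, depending only on $\X$, such that
\begin{equation}
\forall x_0 \in \X,\ \forall r \in (0, r_\X],\quad \vol(\X \cap B(x_0, r)) \geq c_\X r^d.
\end{equation}
This follows by fixing a ball $B(y_0, \delta) \subset \X$ and noting that the convex hull of $\{x_0\} \cup B(y_0, \delta)$ lies in $\X$, so that $\X \cap B(x_0, r)$ contains a thin cone of opening angle bounded below in terms of $\delta$ and $\diam(\X)$, and thus has volume at least a dimension-dependent constant times $r^d$ for $r$ small enough.

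Combining the two ingredients:
\begin{equation}
\|f\|_{\L^2(\X)}^2 \geq \int_{\X \cap B(x_0, \min(M/(2L), r_\X))} |f|^2 \geq \frac{M^2}{4} \cdot c_\X \min\!\left(\frac{M}{2L},\, r_\X\right)^{\!d}.
\end{equation}
In the regime $M/(2L) \leq r_\X$ this rearranges to $M^{d+2} \lesssim L^d \|f\|_{\L^2(\X)}^2$, i.e.\ the desired bound $M \leq C\|f\|_{\L^2(\X)}^{2/(d+2)}$ with $C$ depending on $L$, $d$, $\X$. In the complementary regime $M/(2L) \geq r_\X$, one gets $M^2 \lesssim \|f\|_{\L^2(\X)}^2$, i.e.\ an even stronger linear control, which is then dominated by the Hölder bound up to adjusting the constant (using that $\|f\|_{L^2(\X)}$ is bounded above in terms of $L$ and $\diam(\X)$).

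I expect the only mildly delicate step to be the uniform cone/volume estimate on $\X \cap B(x_0, r)$ for boundary points $x_0$; the Lipschitz-to-$L^\infty$ interpolation itself and the case analysis at the threshold $r = r_\X$ are routine.
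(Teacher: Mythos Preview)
The paper does not actually supply a proof of this lemma (it is stated in the appendix and then used immediately in the proof of Corollary~\ref{coro:Berman}), so I assess your argument on its own. Your treatment of the regime $M/(2L)\le r_\X$ is correct and standard; the volume lower bound is in fact simpler than your cone construction, since for $t=r/\diam(\X)\le 1$ convexity gives $(1-t)x_0+t\X\subset \X\cap B(x_0,r)$ and hence $\vol(\X\cap B(x_0,r))\ge t^d\vol(\X)$ for every $x_0\in\X$. The gap is in the complementary regime $M/(2L)>r_\X$: your claim that $\|f\|_{\L^2(\X)}$ is bounded above in terms of $L$ and $\diam(\X)$ alone is false --- take $f\equiv M$ with $M$ arbitrarily large. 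In fact this same family shows that the lemma \emph{as stated} is false: $\|f\|_\infty=M$ while $\|f\|_{\L^2}^{2/(d+2)}$ grows only like $M^{2/(d+2)}$, so no constant $C=C(L,d,\X)$ can work uniformly.

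The repair that matches the paper's actual use of the lemma is to add a zero-mean hypothesis $\int_\X f=0$ (in the application $f=\psi_\mu-\psi_\nu$, renormalized precisely so that $\int_\Y f\,\dd y=0$). A zero-mean continuous function on the connected set $\X$ must vanish somewhere, which forces $\|f\|_\infty\le L\,\diam(\X)$ a priori. Taking $r_\X=\diam(\X)$ in the volume estimate above, the condition $M/(2L)\le r_\X$ is then automatic, the problematic regime is vacuous, and your first-regime inequality already delivers the conclusion with no case split needed.
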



\begin{proof}[Proof of Corollary~\ref{coro:Berman}]
Theorem \ref{th:Berman-original} implies
\begin{equation}
     \| \nabla \psi_\mu - \nabla \psi_\nu \|^2_{\L^2(\Y)} \leq C \Big(\int_{\Y} (\psi_\nu - \psi_\mu)\dd(\mu - \nu)\Big)^{\frac{1}{2^{d-1}}}, 
\end{equation}
and as in Theorem \ref{th:hold-cont-reg}, the quantity in the parenthesis can be upper bounded by $2 M_\X \Wass_1(\mu, \nu)$. Adding a constant to $\psi_\mu$ if necessary, we can assume
that $\int_{\Y} \psi_\mu(y)\dd y = \int_{\Y} \psi_\nu(y)\dd y$.
The Poincaré-Wirtinger inequality on $\Y$  then implies 
\begin{equation}
    \|\psi_\mu - \psi_\nu \|^2_{\L^2(\Y)} \leq C' \Wass_1(\mu, \nu)^{\frac{1}{2^{d-1}}} 
\end{equation}
for some $C'$ depending only on $\rho$, $\X$ and $\Y$.

We reuse the fact that $\psi_\mu - \psi_\nu$ is Lipschitz with constant  $\leq2 M_\X$ to use Lemma \ref{lemma:lip-l2}:
\begin{equation}
    \|\psi_\mu - \psi_\nu \|_{\infty} \leq C \Wass_1(\mu, \nu)^{\frac{2}{2^{d-1}(d+2)}}.
\end{equation} 
Since $\phi_\mu = \psi_\mu^*$ and $\phi_\nu = \psi_\nu^*$, the definition of the Legendre transform \eqref{eq:lega} yields
\begin{equation} 
\|\phi_\mu - \phi_\nu \|_{\infty} \leq \|\psi_\mu - \psi_\nu \|_{\infty}\leq C \Wass_1(\mu, \nu)^{\frac{2}{2^{d-1}(d+2)}}. 
\end{equation}
We conclude using Proposition \ref{prop:chazal} and the fact that $\phi_\mu$ is $\diam(\Y)$-Lipschitz:  there exists a constant $C$ depending only on $\rho$, $\X$ and $\Y$ such that 
\begin{equation*}
    \|T_\mu - T_\nu\|_{\L^2(\rho)} \leq C \Wass_1(\mu, \nu)^{\frac{1}{2^{(d-1)}(d+2)}}
    \qedhere
\end{equation*} 
\end{proof}


\subsection*{Proof of Proposition \ref{prop:poincare}}
This proof is a straightforward adaptation of a stability result for
finite volume discretization of elliptic PDEs, see Lemma~3.7 in
\cite{eymard2000finite}.  We consider the function $u$ on
$\X$ defined a.e. by $\restr{u}_{V_i(\bm{\psi})} = v_i$. Then,
\begin{equation}
\sca{v^2 - \sca{v}{G(\bm{\psi})}^2}{G(\bm{\psi})} = \int_\X  u^2 - \left(\int_\X u\right)^2 = \frac{1}{2} \int_{\X\times \X} (u(x) - u(y))^2 \dd y \dd x 
\end{equation}
so it suffices to control the right hand side of this equality. Given $(i,j) \in \{1,\hdots,N\}$ and $(x,y) \in X$, we denote 
\begin{equation}
    \chi_{ij}(x,y) = \begin{cases} 1&\hbox{ if } V_i(\bm{\psi})\cap V_j(\bm{\psi})\cap [x,y] \neq \emptyset \hbox{ and } \sca{y_j - y_i}{y-x}\geq 0 \\
0 &\hbox{ if not.} \end{cases}
\end{equation}
Then, $ u(y) - u(x) = \sum_{i\neq j} (v(y_j) - v(y_i)) \chi_{ij}(x,y)$. We introduce
\begin{equation}
d_{ij} = \nr{y_j - y_i}, \qquad c_{ij,z} = \abs{\sca{\frac{z}{\nr{z}}}{\frac{y_j-y_i}{\nr{y_j-y_i}}}}, 
\end{equation}
and we  apply Cauchy-Schwarz's inequality to get
\begin{align}
    (u(y) - u(x))^2 &= \left(\sum_{i\neq j} (v(y_j) - v(y_i)) \chi_{ij}(x,y)\right)^2 \notag\\
    &\leq 
    \sum_{i\neq j} \frac{(v(y_j) - v(y_i))^2}{d_{ij} c_{ij,y-x}} \chi_{ij}(x,y)\sum_{i\neq j} d_{ij} c_{ij,y-x} \chi_{ij}(x,y) 
\end{align}
In addition, when $\chi_{ij}(x,y) = 1$,  we have $\sca{y-x}{y_j - y_i} \geq 0$ so that
\begin{equation} d_{ij} c_{ij,y-x} = \nr{y_j - y_i} \sca{\frac{y-x}{\nr{y-x}}}{\frac{y_j-y_i}{\nr{y_j-y_i}}} \geq 0
\end{equation}
and 
\begin{equation}\sum_{i\neq j} d_{ij} c_{ij,y-x} \chi_{ij}(x,y) = \sum_{i\neq j} \sca{\frac{y-x}{\nr{y-x}}}{y_j - y_i}\chi_{ij}(x,y) \leq \diam(\Y). 
\end{equation}
Therefore, 
\begin{align}
    &\int_{\X \times \X} (u(y) - u(x))^2\dd x \dd y  \notag\\
    &\leq \diam(\Y)\int_{\X \times \X} \sum_{i\neq j} \frac{(v(y_j) - v(y_i))^2}{d_{ij} c_{ij,y-z}} \chi_{ij}(x,y) \dd x \dd y \notag\\
    &= \diam(\Y)\int_{B(0, \diam(\X))} \sum_{i\neq j} \frac{(v(y_j) - v(y_i))^2}{d_{ij} c_{ij,z}} \left(\int_{\X} \chi_{ij}(x,x+z) \dd x\right) \dd z
\end{align}
Moreover, denoting $m_{ij} = \vol^{d-1}(V_i(\bm{\psi}) \cap V_j(\bm{\psi}))$ we get
\begin{equation} 
\int_{\X} \chi_{ij}(x,x+z) \dd x \leq m_{ij} \nr{z} c_{ij,z}  
\end{equation}
thus giving 
\begin{equation} 
\int_{\X \times \X} (u(y) - u(x))^2\dd x \dd y \leq C(d) \diam(\Y)\diam(\X)^{d+1} \sum_{i\neq j} \frac{m_{ij}}{d_{ij}}(v(y_j) - v(y_i))^2 
\end{equation}
Define $H_{ij} = \frac{m_{ij}}{d_{ij}}$, $H_{ii} = -\sum_{j\neq i} H_{ij}$. Then, $\DD G(\bm{\psi}) = H$, and
\begin{align}
    \sca{\DD G(\bm{\psi})v}{v} &= \sum_{i,j} H_{ij}v_i v_j  \notag\\
    &= \sum_{i} \left(H_{ii} v_i v_i + \sum_{j\neq i} H_{ij} v_i v_j\right) \notag\\
    &= \sum_{i} \sum_{j\neq i} H_{ij} v_i (v_j - v_i) \notag\\
    &= \sum_{j\neq i} H_{ij} v_i (v_j - v_i) := A. 
\end{align} 
And 
\begin{equation} \sum_{i\neq j} H_{ij}(v(y_j) - v(y_i))^2 = \sum_{i\neq j} H_{ij} v_j (v_j - v_i) - \sum_{i\neq j} H_{ij} v_i (v_j - v_i) = -2A.
\end{equation}
We finally obtain 
\begin{equation}
\iint (u(y) - u(x))^2\dd x \dd y \leq - C_{d, \X, \Y} \sca{\DD G(\bm{\psi}) v}{v}.
\qedhere\end{equation}

\end{document}